\documentclass{article}

\usepackage{microtype}
\usepackage{graphicx}
\usepackage{subcaption}
\usepackage{booktabs} 

\usepackage{hyperref}

\usepackage[preprint]{icml2026}

\usepackage{amsmath}
\usepackage{amssymb}
\usepackage{mathtools}
\usepackage{amsthm}

\usepackage[capitalize,noabbrev]{cleveref}

\theoremstyle{plain}
\newtheorem{theorem}{Theorem}[section]
\newtheorem{proposition}[theorem]{Proposition}
\newtheorem{lemma}[theorem]{Lemma}
\newtheorem{corollary}[theorem]{Corollary}
\theoremstyle{definition}
\newtheorem{definition}[theorem]{Definition}
\newtheorem{assumption}[theorem]{Assumption}
\theoremstyle{remark}
\newtheorem{remark}[theorem]{Remark}

\usepackage[textsize=tiny]{todonotes}

\icmltitlerunning{ELSAA: Efficient Low-Rank and Sparse Attention Approximation for Training Transformers}

\begin{document}

\twocolumn[
  \icmltitle{ELSAA: Efficient Low-Rank and Sparse Attention Approximation for Training Transformers}



  \icmlsetsymbol{equal}{*}

\begin{icmlauthorlist}
    \icmlauthor{Mahdi Heidari}{kaist}
    \icmlauthor{Mohammad Mahdi Rahimi}{dgist}
    \icmlauthor{Jaekyun Moon}{kaist}
\end{icmlauthorlist}

\icmlaffiliation{kaist}{School of Electrical Engineering, KAIST, Daejeon 34141, Republic of Korea}
\icmlaffiliation{dgist}{Department of Electrical Engineering and Computer Science, Daegu Gyeongbuk Institute of Science and Technology (DGIST), Daegu 42988, Republic of Korea}

\icmlcorrespondingauthor{Mahdi Heidari}{mahdi.heidari.ee.sut@kaist.ac.kr}

\icmlkeywords{Efficient Attention, Sparse Attention, Low-Rank Attention, Transformers, Long-Context Modeling}

\vskip 0.3in
]



\printAffiliationsAndNotice{}  

\begin{abstract}
The quadratic $N\times N$ attention score matrix remains a central obstacle to extending Transformers to longer input lengths. Existing efficient attention methods usually reduce this bottleneck by either imposing sparsity, so that each query attends to only a small subset of keys, or by using low-rank/kernel sketches, so that global interactions are compressed into a lower-dimensional representation. We propose \emph{ELSAA}, an efficient low-rank and sparse approximation of attention. Importantly, ELSAA does \emph{not} decompose the learned projection or output matrices of the Transformer into sparse and low-rank factors. Instead, after dense projections produce $Q,K,V$, ELSAA approximates the induced attention score operator itself: a sparse branch captures selected high-similarity interactions, while a low-rank branch summarizes diffuse global interactions. Since the two branches can be normalized over supports with very different denominator mass, ELSAA introduces a denominator-aware fusion term that scales the sparse branch according to its estimated attention mass relative to the low-rank branch. This gives a practical framework for constructing low-rank and sparse attention outputs without materializing the full quadratic score matrix, aiming to enable longer-context training while preserving both sharp token-level interactions and broad contextual mixing.
\end{abstract}

\section{Introduction}
\label{sec:introduction}

Transformers have become the dominant architecture for language, vision, and multimodal modeling, largely because self-attention can adaptively mix information across all pairs of tokens \citep{vaswani2017attention,devlin2019bert,brown2020language,dosovitskiy2021image,touvron2023llama}. However, this pairwise interaction is also the main barrier to extending models to longer input lengths. For a sequence of length $N$, standard attention forms the score matrix
\begin{equation}
    Z = QK^\top / \sqrt{d_h},
    \qquad
    P = \mathrm{softmax}(Z),
\end{equation}
where $P\in\mathbb{R}^{N\times N}$ is applied to the value matrix $V$. Equivalently, for a single query token $i$, dense attention computes
\begin{equation}
    d_i
    =
    \sum_{j=1}^{N}\exp(z_{ij}),
    \qquad
    o_i
    =
    \frac{1}{d_i}
    \sum_{j=1}^{N}
    \exp(z_{ij})v_j .
    \label{eq:single_query_attention}
\end{equation}
Here $d_i$ is the attention denominator, and it determines the scale on which the weighted value average is normalized. The memory and compute cost of this dense $N\times N$ operator grows quadratically with context length. Systems advances such as FlashAttention improve the IO efficiency and practical runtime of exact dense attention, but they do not change the fact that exact attention still represents all pairwise token interactions \citep{dao2022flashattention,dao2023flashattention2}. Therefore, algorithmic approximations of the attention operator remain essential for training and serving models with substantially longer contexts.

A large body of work reduces the cost of attention by exploiting either \emph{sparsity} or \emph{low-rank structure}. Sparse attention methods restrict each query to a subset of keys, using local windows, global tokens, random patterns, routing, clustering, hashing, or large-entry detection \citep{child2019generating,beltagy2020longformer,zaheer2020big,kitaev2020reformer,roy2021efficient,daras2020smyrf,han2024hyperattention,jiang2024minference}. These methods are especially appealing when the attention distribution is low-entropy, peaked, or dominated by a small number of relevant keys. In such regimes, most of the denominator in \cref{eq:single_query_attention} may be contributed by only a small fraction of the full key set, and computing the corresponding high-mass interactions exactly can preserve the sharp token-level behavior of dense attention. In contrast, low-rank, kernel, and sketch-based methods approximate the dense attention operator through compressed features or landmarks, enabling global mixing without explicitly constructing the full matrix \citep{wang2020linformer,katharopoulos2020transformers,choromanski2021rethinking,peng2021random,xiong2021nystromformer}. These methods are attractive when attention is high-entropy or diffuse, so that information is spread over many tokens and global context is better represented by a compressed dense summary. However, either assumption alone can be brittle: sparse methods may miss weak but globally important interactions, while low-rank methods may smooth out sharp, high-mass token-token dependencies.

This complementarity suggests that attention should often be approximated as a combination of sparse and low-rank components. Prior work has provided strong evidence for this view, showing that the two approximation families can excel in different regimes of the attention matrix \citep{chen2021scatterbrain}. Classical robust PCA also motivates the broader idea that many matrices can be represented as the sum of a low-rank term and a sparse residual \citep{candes2011robust,chandrasekaran2011rank}. Nevertheless, combining these structures inside attention is subtle. A naive addition of sparse attention and low-rank attention can double-count selected interactions, and Scatterbrain's entrywise correction handles this issue at the level of an unnormalized attention estimator by correcting selected entries exactly \citep{chen2021scatterbrain}. However, an entrywise correction alone does not answer how two separately normalized branches should be balanced during training. In particular, a sparse branch and a low-rank branch may have very different denominator masses, so their outputs can live on different scales even when both branches are useful.

We propose \textbf{ELSAA}: \textbf{E}fficient \textbf{L}ow-rank and \textbf{S}parse \textbf{A}pproximation of \textbf{A}ttention. ELSAA is built around a simple principle: use a sparse branch for high-confidence interactions and a low-rank branch for global residual context. The sparse branch selects keys that are likely to be highly relevant to a given query and computes attention over the selected support. Different sparse key-finding mechanisms can instantiate this principle, including locality-sensitive hashing, routing or clustering, learned or dynamic sparse patterns, and approximate large-entry detection \citep{kitaev2020reformer,roy2021efficient,daras2020smyrf,han2024hyperattention,jiang2024minference,tang2024quest}. The low-rank branch summarizes global key/value information in a compressed representation and produces a query-dependent global context vector without materializing all pairwise scores.

For a given query token $i$, ELSAA forms a fused output
\begin{equation}
    O_i
    =
    g_{\mathrm{sparse},i}\,
    m_{\mathrm{sparse},i}\,
    O_{\mathrm{sparse},i}
    +
    g_{\mathrm{lr},i}\,
    O_{\mathrm{lr},i}.
    \label{eq:elsaa_single_output}
\end{equation}
where $o^{\mathrm{sparse}}_i$ is the sparse-branch output, $o^{\mathrm{lowrank}}_i$ is the low-rank-branch output, and $g_{\mathrm{sparse}}$ and $g_{\mathrm{lowrank}}$ are learned gates. Let $d^{\mathrm{sparse}}_i$ denote the estimated denominator mass covered by the sparse branch, and let $d^{\mathrm{lowrank}}_i$ denote the denominator estimate associated with the low-rank branch. We introduce the denominator-aware correction
\begin{equation}
    m_{\mathrm{sparse},i}
    =
    \frac{d^{\mathrm{sparse}}_i}
    {d^{\mathrm{sparse}}_i + \lambda d^{\mathrm{lowrank}}_i + \varepsilon},
    \label{eq:msparse}
\end{equation}
where $\lambda$ can be fixed or learned. Several empirical studies of long-context attention and KV-cache compression observe that a small fraction of critical tokens or sparse attention patterns can dominate the attention outcome \citep{zhang2023h2o,xiao2024streamingllm,tang2024quest,jiang2024minference}. However, directly adding a sparse output to a low-rank output can distort the scale of the true attention computation, especially when the two branches are normalized by denominators of different magnitudes. The role of $m_{\mathrm{sparse},i}$ is to rescale the sparse contribution according to its estimated denominator mass, so that the two outputs are fused on a more comparable scale. Thus, ELSAA does not merely add two attention approximations; it explicitly accounts for their relative normalization scales.

We emphasize a distinction from sparse plus low-rank \emph{weight} parameterizations. Recent work has shown that representing learned matrices using both low-rank and sparse components can improve parameter and memory efficiency for compression, fine-tuning, or pretraining \citep{li2023losparse,han2024sltrain,mozaffari2024slope}. ELSAA addresses a different object. We keep the learned Transformer projections dense and instead approximate the induced $N\times N$ attention operator after $Q,K,V$ have been formed. This distinction is important for long-context modeling: the bottleneck we target is not the number of parameters in $W_Q,W_K,W_V,W_O$, but the quadratic attention matrix that appears for every input sequence. Combining these two orthogonal directions is also an interesting future research path: one may approximate the learned projection and output matrices in parameter space while simultaneously approximating the input-dependent attention matrix in sequence space.

Our contributions are as follows. First, we formulate long-context attention as a low-rank and sparse operator construction rather than as a decomposition of learned projection matrices. Second, we combine an LSH-selected exact sparse branch with a RACE-style low-rank sketch branch, targeting both sharp local/similar-token interactions and diffuse global context. Third, we introduce denominator-aware fusion through $m_{\mathrm{exact}}$, with optional learnable $\lambda$, to address scale mismatch between sparse and sketch attention branches during training.

\section{Related Work}
\label{sec:related_work}

\paragraph{Efficient attention for long sequences.}
Efficient Transformers aim to overcome the quadratic cost of dense self-attention \citep{tay2020long,tay2022efficient}. One line of work keeps exact or nearly exact attention but improves practical execution. FlashAttention uses IO-aware tiling to avoid materializing the full attention matrix in high-bandwidth memory, providing major speed and memory improvements for exact attention \citep{dao2022flashattention,dao2023flashattention2}. This is complementary to ELSAA: FlashAttention optimizes the computation of dense or block attention, while ELSAA changes the attention operator to avoid constructing all pairwise interactions. Another line of work imposes sparse attention patterns. Sparse Transformer uses structured sparse patterns \citep{child2019generating}; Longformer and BigBird combine local, global, and random attention to support long documents \citep{beltagy2020longformer,zaheer2020big}; Reformer uses LSH to reduce attention cost \citep{kitaev2020reformer}; Routing Transformer and SMYRF use clustering or asymmetric hashing to identify relevant key blocks \citep{roy2021efficient,daras2020smyrf}; and HyperAttention uses LSH-style detection of large entries to obtain near-linear long-context attention under appropriate structure \citep{han2024hyperattention}. More recent long-context inference methods also exploit dynamic sparse attention or query-aware critical-token selection, further supporting the view that only part of the full attention matrix may be necessary for many inputs \citep{zhang2023h2o,xiao2024streamingllm,tang2024quest,jiang2024minference}. These methods are effective when attention is sparse or structured, but sparse selection alone may not capture diffuse background interactions across the entire sequence.

\paragraph{Low-rank, kernel, and sketch attention.}
A second family of efficient attention methods approximates the dense attention operator with low-dimensional features. Linformer projects the sequence dimension and assumes that attention matrices are approximately low-rank \citep{wang2020linformer}. Linear Transformer rewrites attention using kernel feature maps so that computation can be associated as $\phi(Q)(\phi(K)^\top V)$ \citep{katharopoulos2020transformers}. Performer uses positive random features to approximate softmax attention with linear complexity \citep{choromanski2021rethinking}, while Random Feature Attention studies related random-feature approximations for autoregressive modeling \citep{peng2021random}. Nystr\"omformer approximates attention using landmark-based Nystr\"om methods \citep{xiong2021nystromformer}. RACE attention uses differentiable hash-sketch summaries, where keys are softly assigned to buckets, bucket statistics are accumulated, and queries read out from these summaries \citep{joshi2026race}. Such methods provide global context at subquadratic or linear cost, but they can struggle to represent sharp, high-confidence entries of the attention distribution. ELSAA uses the low-rank branch for global coverage while delegating high-mass interactions to a sparse branch.
\paragraph{Sparse plus low-rank attention.}
The idea that attention may require both sparse and low-rank structure is closely related to sparse plus low-rank matrix decomposition. Robust PCA studies decompositions of a matrix into a low-rank component and a sparse component \citep{candes2011robust,chandrasekaran2011rank}. Scatterbrain applies this perspective to attention and shows that sparse and low-rank approximations can be complementary across attention entropy regimes \citep{chen2021scatterbrain}. Its construction uses a low-rank random feature approximation together with a sparse LSH-selected correction so that selected entries match the unnormalized exponential attention exactly, while non-selected entries are approximated by the low-rank component. This provides an elegant entrywise estimator with unbiasedness and reduced variance. However, its analysis is primarily entrywise and does not directly provide a general output-level variance or scale analysis for combining attention branches that may be normalized by different denominators. ELSAA takes a related but distinct view: rather than correcting a single unnormalized matrix estimator, we study a training-time architecture with two separately normalized attention-like branches. In this setting, the main issue is not only entrywise unbiasedness, but also how to balance branch outputs whose denominator masses and scales can differ. The factor $m_{\mathrm{sparse},i}$ in \cref{eq:msparse} is designed for this branch-scale mismatch.
\paragraph{Sparse plus low-rank parameterization of model weights.}
Sparse plus low-rank structure has also been used to parameterize, compress, or adapt the learned weights of neural networks. LoRA represents fine-tuning updates with low-rank factors \citep{hu2022lora}, while subsequent work studies quantized low-rank adaptation, weight-decomposed low-rank adaptation, restarted low-rank training, and low-rank gradient projection \citep{dettmers2023qlora,liu2024dora,lialin2024relora,zhao2024galore}. Other work explicitly combines sparsity with low-rank structure in parameter space: LoSparse compresses model weights using low-rank and sparse approximations \citep{li2023losparse}; SLoPe augments sparse LLM pretraining with low-rank adapters \citep{mozaffari2024slope}; and SLTrain parameterizes weight matrices with a learned low-rank component together with a learned sparse component, improving parameter and memory efficiency during pretraining \citep{han2024sltrain}. Sparse low-rank adaptation methods also study how sparsity can be incorporated into low-rank fine-tuning updates \citep{ding2023sora}. This direction is orthogonal to ELSAA. We do not replace $W_Q,W_K,W_V$, or $W_O$ by sparse plus low-rank factors. Instead, our goal is to approximate the input-dependent attention operator induced by dense projections, thereby reducing the sequence-length bottleneck and enabling longer contexts. In short, SL-style methods decompose learned matrices in parameter space, whereas ELSAA constructs a sparse plus low-rank approximation in attention space.
\section{Methodology}
\label{sec:methodology}
\begin{table*}[!t]
\centering
\small
\caption{Comparison of attention methods across long-context classification tasks.}
\label{tab:combined_long_context_results}
\setlength{\tabcolsep}{4pt}
\begin{tabular}{l|cc|cc|cc|cc|cc}
\toprule
& \multicolumn{2}{c|}{\textbf{ArXiv @ 32K}} 
& \multicolumn{2}{c|}{\textbf{Oxford-IIIT Pet @ 16K}} 
& \multicolumn{2}{c|}{\textbf{Flowers-102 @ 16K}}
& \multicolumn{2}{c|}{\textbf{Food-101 @ 16K}}
& \multicolumn{2}{c}{\textbf{Average}} \\
\cline{2-11}
\textbf{Method}
& \textbf{Train} \(\uparrow\)
& \textbf{Test} \(\uparrow\)
& \textbf{Train} \(\uparrow\)
& \textbf{Test} \(\uparrow\)
& \textbf{Train} \(\uparrow\)
& \textbf{Test} \(\uparrow\)
& \textbf{Train} \(\uparrow\)
& \textbf{Test} \(\uparrow\)
& \textbf{Train} \(\uparrow\)
& \textbf{Test} \(\uparrow\) \\
\midrule
ELSAA
&  99.97\% & 93.93\%
& \textbf{91.55\%} & \textbf{22.51\%}
& \textbf{97.84\%} & 42.45\%
& \textbf{76.43\%} & \textbf{28.33\%}
& \textbf{91.45\%} & \textbf{46.81\%} \\

Sort\_LSH
& 95.41\% & 84.94\%
& 23.96\% & 12.65\%
& 77.25\% & 37.25\%
& 59.63\% & 19.03\%
& 64.06\% & 38.47\% \\

RACE
& \textbf{100.00\%} & \textbf{95.05\%}
& 59.89\% & 16.13\%
& 91.66\% & 35.49\%
& 67.57\% & 24.00\%
& 79.78\% & 42.67\% \\

Sort\_Lsh\_RACE
& 99.95\% & 93.60\%
& 76.76\% & 19.00\%
& 95.98\% & \textbf{42.65\%}
& 76.05\% & 26.67\%
& 87.19\% & 45.48\% \\

Exactflash
& 81.77\% & 70.11\%
& 92.60\% & 24.66\%
& 97.81\% & 48.13\%
& 78.93\% & 29.96\%
& 87.78\% & 43.22\% \\
\bottomrule
\end{tabular}
\end{table*}
\begin{table}[!t]
\centering
\small
\caption{Text Retrieval @ 64K}
\label{tab:text_retrieval_64k}
\begin{tabular}{lcc}
\toprule
\textbf{Method} & \textbf{Train Acc.} \(\uparrow\) & \textbf{Test Acc.} \(\uparrow\) \\
\midrule
ELSAA            & 94.73\% & 65.34\% \\
RACE             & \textbf{94.95\%} & \textbf{66.30\%} \\
Sort\_Lsh\_RACE  & 94.28\% & 66.00\% \\
Exactflash       & $\approx 50$ & $\approx 50$ \\
\bottomrule
\end{tabular}
\end{table}
\begin{table*}[!t]
\centering
\small
\caption{Comparison of attention methods on IMDB and Fashion-MNIST.}
\label{tab:imdb_fashion_results}
\setlength{\tabcolsep}{4.5pt}
\begin{tabular}{l|cc|cc|cc}
\toprule
& \multicolumn{2}{c|}{\textbf{IMDB @ 512}} 
& \multicolumn{2}{c|}{\textbf{Fashion-MNIST @ 784}} 
& \multicolumn{2}{c}{\textbf{Average}} \\
\cline{2-7}
\textbf{Method}
& \textbf{Train} \(\uparrow\)
& \textbf{Test} \(\uparrow\)
& \textbf{Train} \(\uparrow\)
& \textbf{Test} \(\uparrow\)
& \textbf{Train} \(\uparrow\)
& \textbf{Test} \(\uparrow\) \\
\midrule
ELSAA
& \textbf{90.68\%} & 75.93\%
& 87.79\% & 88.25\%
& 89.24\% & 82.09\% \\

Sort\_LSH
& 70.45\% & 66.14\%
& 82.11\% & 83.15\%
& 76.28\% & 74.65\% \\

RACE
& 89.93\% & 77.41\%
& \textbf{89.79\%} & \textbf{89.43\%}
& \textbf{89.86\%} & \textbf{83.42\%} \\

Sort\_Lsh\_RACE
& 90.20\% & \textbf{77.71\%}
& 85.43\% & 86.39\%
& 87.82\% & 82.05\% \\

Exactflash
& 89.54\% & 78.67\%
& 85.95\% & 85.81\%
& 87.75\% & 82.24\% \\
\bottomrule
\end{tabular}
\end{table*}
\begin{table*}[!t]
\centering
\small
\caption{NIAH task test accuracy across input lengths. OOM indicates out-of-memory.}
\label{tab:niah_length_results}
\setlength{\tabcolsep}{3.5pt}
\begin{tabular}{lccccccccc}
\toprule
\textbf{Method}
& \textbf{512}
& \textbf{1024}
& \textbf{2048}
& \textbf{4096}
& \textbf{8192}
& \textbf{16384}
& \textbf{32768}
& \textbf{65536}
& \textbf{Avg.} \\
\midrule
Exact Flash
& 100.0\% & 100.0\% & 100.0\% & 100.0\% & 100.0\% & 100.0\% & OOM & OOM & 100.0\% \\

ELSAA
& \textbf{100.0\%} & \textbf{100.0\%} & \textbf{100.0\%} & \textbf{100.0\%} & \textbf{100.0\%} & \textbf{100.0\%} & 98.8\% & 84.2\% & \textbf{97.88\%} \\

Sparse\_LSH
& 24.6\% & \textbf{100.0\%} & \textbf{100.0\%} & \textbf{100.0\%} & \textbf{100.0\%} & \textbf{100.0\%} & \textbf{100.0\%} & \textbf{100.0\%} & 90.58\% \\

RACE
& 11.0\% & \textbf{100.0\%} & \textbf{100.0\%} & 52.0\% & 13.4\% & 6.4\% & 4.8\% & 2.2\% & 36.23\% \\

Performer
& 19.2\% & \textbf{100.0\%} & 88.2\% & 87.6\% & 88.2\% & 87.4\% & 83.2\% & 83.4\% & 79.65\% \\
\bottomrule
\end{tabular}
\end{table*}
\begin{table*}[!t]
\centering
\small
\caption{Comparison of \textbf{causal} attention methods across long-context classification tasks.}
\label{tab:combined_long_context_results}
\setlength{\tabcolsep}{4pt}
\begin{tabular}{l|cc|cc|cc|cc}
\toprule

& \multicolumn{2}{c|}{\textbf{ArXiv @ 32K}} 
& \multicolumn{2}{c|}{\textbf{ArXiv @ 64K}}
& \multicolumn{2}{c|}{\textbf{Tiny ImageNet @ 1024}}
& \multicolumn{2}{c}{\textbf{Average}} \\
\cline{2-9}
\textbf{Method}
& \textbf{Train} \(\uparrow\)
& \textbf{Test} \(\uparrow\)
& \textbf{Train} \(\uparrow\)
& \textbf{Test} \(\uparrow\)
& \textbf{Train} \(\uparrow\)
& \textbf{Test} \(\uparrow\)
& \textbf{Train} \(\uparrow\)
& \textbf{Test} \(\uparrow\) \\
\midrule

ELSAA
& \textbf{100\%} & \textbf{88.90\%}
& \textbf{100\%} & \textbf{91.47\%}
& \textbf{94.44\%} & 36.07\%
& \textbf{98.15\%} & \textbf{72.15\%} \\

RACE
& 99.97\% & 87.84\%
& 99.88\% & 91.46\%
& 93.41\% & \textbf{36.39\%}
& 97.75\% & 71.90\% \\

Exactflash
& 99.97\% & 87.96\%
& 99.83\% & 87.44\%
& 96.59\% & 37.32\%
& 98.80\% & 70.91\% \\

\bottomrule
\end{tabular}
\end{table*}
We instantiate ELSAA inside each attention layer by replacing dense attention with a sparse--low-rank hybrid operator that avoids constructing the full \(N\times N\) score matrix. In this work, we use RACE attention as the low-rank global branch \citep{joshi2026race}, and use sorted Hamming LSH to identify highly correlated query-key pairs with high probability. This follows a line of efficient-attention methods that use hashing, kernel-density estimation, or large-entry detection to avoid materializing all pairwise attention scores, including KDEformer and HyperAttention \citep{zandieh2023kdeformer,han2024hyperattention}. The sparse branch captures high-similarity query-key interactions exactly, while the low-rank branch supplies global context through compressed bucket statistics. Future variants should also consider sliding-window, global-token, and learned top-\(k\) sparse selectors; these are natural alternatives but require careful implementation to preserve efficiency.

We describe the method for one attention head. Multi-head ELSAA applies the same procedure independently per head and concatenates the outputs as in standard Transformers. Let $ Q,K,V\in\mathbb{R}^{N\times d_h}$
be the query, key, and value matrices. Dense attention computes
$
    O_i
    =
    \frac{
    \sum_{j=1}^{N}\exp(Q_i^\top K_j/\sqrt{d_h})V_j
    }{
    \sum_{j=1}^{N}\exp(Q_i^\top K_j/\sqrt{d_h})
    },
$
which requires all \(N^2\) query-key scores. ELSAA instead computes two outputs,
   $ O_{\mathrm{lr}}$
    and
   $ O_{\mathrm{sparse}},$
together with denominator proxies
   $ d_{\mathrm{lr}},
    d_{\mathrm{sparse}},$
and combines them through a denominator-aware fusion rule.


\subsection{Low-rank branch: RACE attention}

RACE replaces the dense attention matrix with soft hash-bucket summaries. Each table softly assigns queries and keys to hypercube buckets, accumulates key/value statistics inside these buckets, and lets each query read from the resulting summaries. For a fixed number of tables, buckets, and head dimension, this gives a linear-time global branch in the sequence length \(N\). We use the normalized ratio form because it directly provides both a low-rank output \(O_{\mathrm{lr}}\) and a denominator proxy \(d_{\mathrm{lr}}\), which are both needed by the ELSAA fusion rule. For the causal setting, we extend RACE by partitioning the sequence into fixed-size chunks and decomposing the output into an inter-chunk contribution---bucket statistics accumulated from all strictly prior chunks via prefix sums---and a causally-masked intra-chunk contribution, preserving linear complexity. Full pseudocode for the non-causal and causal variants is given in \cref{app:elsaa_branch_algorithms,alg:race_lowrank_app,alg:causal_race_app}.


\subsection{Sparse branch: sortLSH exact attention}

The sparse branch is designed to preserve sharp token-level interactions that the low-rank branch may smooth. We use a sortLSH selector: queries and keys are hashed, sorted by their hash values, and grouped into equal-size blocks. After sorting, high-similarity query-key interactions are expected to concentrate near diagonal blocks of the permuted attention matrix. This lets us compute exact attention inside selected blocks using dense block matrix multiplications, without constructing the full \(N\times N\) attention matrix. The branch returns a normalized sparse output \(O_{\mathrm{sparse}}\) and its denominator \(d_{\mathrm{sparse}}\). For the causal setting, we adopt a recursive divide-and-conquer structure \citep{han2024hyperattention}: the sequence is split into past and future halves, the diagonal blocks are processed by recursive causal calls, and the future--past off-diagonal block is handled by standard non-causal sortLSH since all its keys precede all its queries; the two future contributions are merged via a log-sum-exp-aware combination that preserves exact softmax normalization. Full pseudocode for the non-causal and causal variants is given in \cref{app:elsaa_branch_algorithms,alg:sortlsh_sparse_app,alg:causal_sortlsh_app}.

\subsection{Denominator-aware sparse--low-rank fusion}

The two branches are complementary but live on different normalization scales. The sparse branch computes selected high-confidence interactions exactly, but its support may cover only part of the row denominator. The low-rank branch provides global coverage, but may smooth sharp interactions. Therefore, directly adding the two outputs can over-amplify the sparse branch or double count overlapping mass.

ELSAA introduces a denominator-aware sparse multiplier
\begin{equation}
    m_{\mathrm{sparse},i}
    =
    \frac{
    d_{\mathrm{sparse},i}
    }{
    d_{\mathrm{sparse},i}
    +
    \lambda_i d_{\mathrm{lr},i}
    +
    \varepsilon
    },
    \label{eq:m_sparse_method}
\end{equation}
where \(\lambda_i>0\) can be fixed, a learnable scalar shared across tokens, or a query-dependent coefficient. The final ELSAA output uses two learned gates:
\[
    (g_{\mathrm{sparse},i},g_{\mathrm{lr},i})
    =
    \sigma(G_\theta(q_i))
    \in(0,1)^2.
\]
The fused output for token \(i\) is
\begin{equation}
    O_i
    =
    g_{\mathrm{sparse},i}\,
    m_{\mathrm{sparse},i}\,
    O_{\mathrm{sparse},i}
    +
    g_{\mathrm{lr},i}\,
    O_{\mathrm{lr},i}.
    \label{eq:elsaa_fusion_output}
\end{equation}
The factor \(m_{\mathrm{sparse},i}\) scales the sparse branch according to its estimated denominator mass relative to the low-rank branch, while the gates learn how much each branch should contribute to the final representation.

\begin{algorithm}[!t]
    \footnotesize
    \caption{ELSAA: Efficient Low-Rank and Sparse Approximation of Attention}
    \label{alg:elsaa}
    \begin{algorithmic}[1]
        \STATE \textbf{Input:} hidden states \(X\in\mathbb{R}^{N\times d}\), projections \(W_Q,W_K,W_V,W_O\), RACE parameters \(L_{\mathrm{s}},\gamma,\beta\), sortLSH block size \(b\), gate network \(G_\theta\), coefficient rule \(\lambda_i>0\), numerical floor \(\varepsilon>0\)
        \STATE \textbf{Output:} hybrid attention output \(O\in\mathbb{R}^{N\times d}\)
        \STATE Project to one attention head
        \[
            Q\gets XW_Q,
            \qquad
            K\gets XW_K,
            \qquad
            V\gets XW_V.
        \]
        \STATE Run the low-rank branch
        \[
            (O_{\mathrm{lr}},d_{\mathrm{lr}})
            \gets
            \mathrm{RACE}(Q,K,V;L_{\mathrm{s}},\gamma,\beta,\varepsilon).
        \]
        \STATE Run the sparse exact branch
        \[
            (O_{\mathrm{sparse}},d_{\mathrm{sparse}})
            \gets
            \mathrm{sortLSH}(Q,K,V;b,\varepsilon).
        \]
        \STATE Compute denominator-aware sparse multiplier
        \[
            m_{\mathrm{sparse},i}
            \gets
            \frac{
            d_{\mathrm{sparse},i}
            }{
            d_{\mathrm{sparse},i}
            +
            \lambda_i d_{\mathrm{lr},i}
            +
            \varepsilon
            },
            \qquad i=1,\dots,N.
        \]
        \STATE Compute token-wise gates
        \[
            (g_{\mathrm{sparse}},g_{\mathrm{lr}})
            \gets
            \sigma(G_\theta(Q)).
        \]
        \STATE Fuse branch outputs
        \[
            O_{\mathrm{head}}
            \gets
            g_{\mathrm{sparse}}\odot m_{\mathrm{sparse}}\odot O_{\mathrm{sparse}}
            +
            g_{\mathrm{lr}}\odot O_{\mathrm{lr}}.
        \]
        \STATE Return \(O\gets O_{\mathrm{head}}W_O\)
    \end{algorithmic}
\end{algorithm}

\paragraph{Practical notes.}
The RACE branch can be implemented with different reduction modes using the same bucket statistics, while the sparse branch can be replaced by any efficient important-key selector that returns an exact sparse output and a denominator. In this work, we use sortLSH blocks because sorting makes the selected attention matrix block-diagonal after permutation, enabling efficient exact block attention without materializing the full \(N\times N\) matrix.
\section{A Rank View of Sparse + Low-Rank Attention}
\label{sec:rank_theory}

We study hybrid sparse--low-rank attention operators of the form
\(\widehat M=S_\Omega+BA\), where \(S_\Omega\in\mathbb{R}^{n\times n}\) is sparse and
\(BA\in\mathbb{R}^{n\times n}\) has rank at most \(r\), with
\(B\in\mathbb{R}^{n\times r}\), \(A\in\mathbb{R}^{r\times n}\). The sparse support
\(\Omega\subseteq[n]\times[n]\) is generated by angular collisions.

\begin{definition}[Angular collision probability]
\label{def:angular_collision_prob}
For nonzero \(Q_i,K_j\in\mathbb{R}^d\), define
\[
\rho_{ij}:=1-\frac{1}{\pi}\arccos
\frac{Q_i^\top K_j}{\|Q_i\|_2\|K_j\|_2}\in[0,1].
\]
For \(\gamma>0\), the single-trial collision probability is
\(\pi_{ij}:=\rho_{ij}^{\gamma}\). With \(L_{\mathrm{s}}\) independent sparse hash
trials, including \((i,j)\) after at least one collision gives
\[
q_{ij}:=1-(1-\pi_{ij})^{L_{\mathrm{s}}}
      =1-(1-\rho_{ij}^{\gamma})^{L_{\mathrm{s}}}.
\]
\end{definition}

Let \(M^\star\in\mathbb{R}^{n\times n}\) denote the exact attention score/kernel
matrix, e.g., \(M^\star_{ij}=\exp(Q_i^\top K_j)\), and define
\(S_\Omega:=\Omega\odot M^\star\).

\begin{definition}[Sparse collision graph and matching deficiency]
\label{def:sparse_collision_graph}
The support \(\Omega\) defines the bipartite graph
\(G_\Omega=([n]_{\rm row},[n]_{\rm col},\Omega)\), where edge \((i,j)\in\Omega\)
connects row \(i\) to column \(j\). Let \(\nu(\Omega)\) be its maximum matching
size and define the matching deficiency \(d(\Omega):=n-\nu(\Omega)\).
\end{definition}

\begin{assumption}[Independent angular edge model]
\label{as:independent_collision_model}
Conditioned on \(Q,K\), we analyze the idealized model
\(\Omega_{ij}\sim{\rm Bernoulli}(q_{ij})\) independently over \(i,j\), preserving
the marginal angular collision probabilities.
\end{assumption}

\begin{assumption}[Generic sparse values and generic low-rank factors]
\label{as:generic_values_lowrank}
The nonzero entries of \(S_\Omega\) are in general position on their support,
and \(B,A\) are drawn independently from absolutely continuous distributions.
\end{assumption}

\begin{proposition}[Sparse matching deficiency plus low rank]
\label{prop:matching_plus_lowrank}
Under \cref{as:generic_values_lowrank}, for fixed \(\Omega\),
\[
\operatorname{rank}(S_\Omega+BA)=\min\{n,\nu(\Omega)+r\}
\quad\textnormal{a.s.}
\]
Hence, if \(\nu(\Omega)\ge n-r\), then
\(\operatorname{rank}(S_\Omega+BA)=n\) almost surely.
\end{proposition}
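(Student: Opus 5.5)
Both inequalities will come from the same polynomial-genericity principle: the rank of a matrix whose entries are polynomials in independent absolutely continuous random variables equals, almost surely, its generic rank. The reason is that the set where a nonzero minor vanishes has Lebesgue measure zero. So it suffices to prove an upper bound that holds deterministically, and a lower bound at one specific parameter choice.

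\emph{Upper bound.} By K\"onig's theorem, $\nu(\Omega)$ equals the minimum vertex cover of $G_\Omega$. Let $R\subseteq[n]_{\rm row}$ and $C\subseteq[n]_{\rm col}$ be a minimum cover with $|R|+|C|=\nu(\Omega)$. Every nonzero of $S_\Omega$ then lies in a row of $R$ or a column of $C$. We split $S_\Omega=S_R+S_C$, where $S_R$ keeps the rows indexed by $R$ and $S_C$ keeps the remaining entries, which all lie in columns of $C$. Hence $\operatorname{rank}S_\Omega\le |R|+|C|=\nu(\Omega)$, and $\operatorname{rank}(S_\Omega+BA)\le \nu(\Omega)+r$. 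Combined with the trivial bound $n$, this gives the upper bound for every realization.

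\emph{Lower bound.} Set $k=\min\{n,\nu(\Omega)+r\}$. We need one $k\times k$ minor of $S_\Omega+BA$ that is not identically zero as a polynomial in the free variables. These variables are the entries of $B$ and $A$, together with the support values of $S_\Omega$, which are treated as generic by \cref{as:generic_values_lowrank}. It is enough to exhibit a single specialization where the minor is nonzero.

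\begin{itemize}
\item Take a maximum matching $\{(i_t,j_t)\}_{t=1}^{\nu}$. Set the support values to $1$ on the matched edges and $0$ on all other support entries; this is allowed because we are specializing polynomial variables, not sampling.
\item Choose $m=\min\{r,n-\nu\}$ unmatched rows $i'_1,\dots,i'_m$ and unmatched columns $j'_1,\dots,j'_m$.
\item Set $B=\sum_{s\le m}e_{i'_s}e_s^\top$ and $A=\sum_{s\le m}e_s e_{j'_s}^\top$. Then $BA$ places ones exactly at the positions $(i'_s,j'_s)$.
\end{itemize}

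These positions are disjoint from the matched rows and columns. The specialized matrix is therefore a partial permutation matrix of rank $\nu+m=k$, and the corresponding minor is $\pm1\neq 0$.

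The main obstacle is justifying the specialization when the sparse values are not literally independent absolutely continuous variables. In the model $M^\star_{ij}=\exp(Q_i^\top K_j)$ they are functionally tied, so we rely on \cref{as:generic_values_lowrank}'s ``general position'' to mean that no nonzero polynomial relation among them holds. If that reading is too strong, the fallback is to condition on $S_\Omega$ and use genericity only in $B,A$. In that case the lower bound must come from a fixed $S_\Omega$, for which we would need $\operatorname{rank}S_\Omega=\nu(\Omega)$. This is exactly where general position of the sparse values is required, by the Edmonds/term-rank argument: the term rank of $S_\Omega$ equals $\nu(\Omega)$, and general position makes the rank equal the term rank.

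Given $\operatorname{rank}S_\Omega=\nu$, we choose $m$ coordinate directions transverse to both the column space and the row space of $S_\Omega$, which is possible because both have dimension $\nu\le n-m$. A fixed rank-$m$ choice of $BA$ along these directions raises the rank to $\nu+m$, and genericity of $B,A$ then yields $k$ almost surely. The final claim follows by substituting $\nu(\Omega)\ge n-r$ into $\min\{n,\nu(\Omega)+r\}$, which gives $n$.
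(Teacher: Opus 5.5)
Your proof is correct, but your main line takes a different route from the paper's. The paper composes two lemmas:
\begin{itemize}
\item a structural-rank lemma, giving $\operatorname{rank}(S_\Omega)=\nu(\Omega)$ for algebraically generic support values via a term-rank argument (a matching monomial cannot cancel, and any nonzero $(\nu+1)$-minor would need a matching of size $\nu+1$);
\item a generic rank-$r$ completion lemma for an arbitrary fixed $S$ of rank $k$, proved by passing to the normal form $PSQ=\mathrm{diag}(I_k,0)$, choosing $\widetilde B\widetilde A$ to place an identity block on coordinates $k+1,\dots,k+t$, and using subadditivity for the upper bound.
\end{itemize}
Your fallback is exactly this route; your transverse coordinate directions play the role of the normal form.

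Your primary route instead works with a single polynomial in all variables. The upper bound comes from the K\"onig vertex-cover splitting $S_\Omega=S_R+S_C$, which is deterministic and explicit for arbitrary values on $\Omega$. The lower bound comes from one witness: the matching plus a $BA$ that pairs unmatched rows with unmatched columns. This makes visible why the low-rank term fills exactly the matching deficiency.

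You also do not need the fallback to get past the obstacle you flagged. Write the witnessing $k\times k$ minor as $\sum_\alpha c_\alpha(s)\,\mu_\alpha(B,A)$, where the $\mu_\alpha$ are monomials in the entries of $B,A$ and the $c_\alpha$ are integer-coefficient polynomials in the support values $s$. Nonvanishing at your witness forces some $c_\alpha\not\equiv 0$. Algebraic genericity of $s$ (the same assumption the paper uses) then gives $c_\alpha(s)\neq 0$. So the minor is a nonzero polynomial in $(B,A)$ alone and is nonzero almost surely.

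Each route has its advantage. Yours is shorter and avoids both the normal form and a separate structural-rank lemma. The paper's modular version buys reuse: its completion lemma applies to any fixed $S$ of known rank. The paper relies on this in its remark on non-generic sparse values, where only $\operatorname{rank}(S_\Omega)\ge n-r$ is assumed.
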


Thus, the sparse component contributes rank through the maximum matching of its
support graph, while the rank-\(r\) branch fills up to \(r\) missing directions.

\begin{theorem}[Full rank from angular sparse collisions and rank-\(r\) low rank]
\label{thm:angular_full_rank}
Condition on \(Q,K\), and suppose
\cref{as:independent_collision_model,as:generic_values_lowrank} hold. Let
\(R:=r+1\), and for \(I,U\subseteq[n]\) define
\[
\Lambda(I,U):=\sum_{i\in I,\ j\in U}q_{ij},\qquad\]

\[
\Delta_r(Q,K):=
\sum_{\substack{I,U\subseteq[n]\\ |I|\ge R,\ |U|\ge R,\ |I|+|U|\ge n+R}}
e^{-\Lambda(I,U)} .
\]
Then
\[
\mathbb{P}\!\left[
\operatorname{rank}(S_\Omega+BA)=n\,\middle|\,Q,K
\right]\ge 1-\Delta_r(Q,K).
\]
\end{theorem}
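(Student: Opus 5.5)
By \cref{prop:matching_plus_lowrank}, applied conditionally on \(\Omega\) (which is independent of \(B,A\) and of the general-position values), the event \(\{\operatorname{rank}(S_\Omega+BA)=n\}\) holds almost surely whenever \(\nu(\Omega)\ge n-r\). It therefore suffices to show
\[
\mathbb{P}\bigl[\nu(\Omega)\le n-R \,\big|\, Q,K\bigr]\le \Delta_r(Q,K),\qquad R=r+1,
\]
and then integrate the proposition's almost-sure statement over the law of \(\Omega\). Thus the theorem reduces to a matching-deficiency tail bound for the random bipartite graph \(G_\Omega\) in which each edge \((i,j)\) is present independently with probability \(q_{ij}\), as in \cref{as:independent_collision_model}.

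To convert a small matching into a combinatorial certificate, I will use K\"onig's theorem. If \(\nu(\Omega)\le n-R\), there is a vertex cover \(C=C_{\rm row}\cup C_{\rm col}\) with \(|C_{\rm row}|+|C_{\rm col}|\le n-R\). Define \(I:=[n]\setminus C_{\rm row}\) and \(U:=[n]\setminus C_{\rm col}\). Because \(C\) covers every edge, no edge of \(\Omega\) joins \(I\) to \(U\), so \(I\times U\) is an empty rectangle. Its size satisfies \(|I|+|U|=2n-|C|\ge n+R\), and since \(|I|,|U|\le n\), each of \(|I|\) and \(|U|\) is at least \(R\). This is exactly the index range of the sum defining \(\Delta_r\). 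Equivalently, this is the deficiency form of Hall's theorem.

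Next comes the union bound. For a fixed pair \((I,U)\), independence gives
\[
\mathbb{P}[\Omega\cap(I\times U)=\emptyset]=\prod_{i\in I,\,j\in U}(1-q_{ij})\le \exp\bigl(-\Lambda(I,U)\bigr),
\]
using \(1-x\le e^{-x}\). Summing over all admissible \((I,U)\) bounds \(\mathbb{P}[\nu(\Omega)\le n-R\mid Q,K]\) by \(\Delta_r(Q,K)\). Combining this with the first paragraph, \(\mathbb{P}[\operatorname{rank}=n\mid Q,K]\ge \mathbb{P}[\nu(\Omega)\ge n-r\mid Q,K]\ge 1-\Delta_r\).

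The main point needing care is the K\"onig step, in particular obtaining the constraints \(|I|,|U|\ge R\) and \(|I|+|U|\ge n+R\) exactly as indexed in \(\Delta_r\), with no off-by-one error between \(\nu\le n-R\) and \(\nu<n-r\). A secondary point is the measure-theoretic combination. Because \(\Omega\) takes finitely many values and \(B,A\) are drawn independently of \(\Omega\), the almost-sure conclusion of \(\mathrm{prop}\) holds simultaneously for all supports, so conditioning on \(\Omega\) causes no difficulty.
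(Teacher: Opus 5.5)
Your proposal is correct and follows the paper's proof: reduce to $\nu(\Omega)\ge n-r$ via \cref{prop:matching_plus_lowrank}, produce an edge-free Hall-relevant rectangle $I\times U$, then apply $1-x\le e^{-x}$ and a union bound. The only difference is that you get the rectangle from a K\"onig vertex cover, whereas the paper uses the deficiency form of Hall's theorem (a deficient row set $I$ with $U:=N_\Omega(I)^c$); this is the same certificate with the same index constraints.
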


The sets \(I,U\) in \(\Delta_r(Q,K)\) are Hall-deficiency cuts: failure occurs
when a large query set has no sparse edges into a large key set. The quantity
\(\Lambda(I,U)\) is the expected number of sampled sparse entries in that
rectangle, so full rank holds when every Hall-relevant rectangle has enough
expected angular collision mass.

We prove the theoretical claims from this section in \cref{app:rank_theory}. In
\cref{app:rank_corollaries}, we further analyze settings that simplify
\cref{thm:angular_full_rank} to
$\mathbb{P}\!\left[
\operatorname{rank}(S_\Omega+BA)=n\,\middle|\,Q,K
\right]\ge 1-n^{-c}.
$
These results show that \(S_\Omega+BA\) can be full rank with high probability
under natural sparse-coverage conditions, motivating further study of
sparse--low-rank attention rank behavior.

\section{Experimental Setup}
\label{sec:results}

We evaluate ELSAA across classification and retrieval tasks to test whether the sparse--low-rank attention structure is useful beyond a single modality or dataset. Since our main motivation is efficient long-context modeling, we prioritize settings with relatively long input sequences or large tokenized inputs whenever possible. The benchmark suite includes long-document text classification, sentiment classification, fine-grained image classification, low-resolution image classification, and text retrieval. For all tasks, we compare attention variants under the same training and evaluation settings, and we report top-1 accuracy. Full task-specific hyperparameters, including learning rates, batch sizes, sequence lengths, model sizes, and training schedules, are reported in \cref{app:experiment_hyperparameters}.

\paragraph{Datasets and tasks.}
We evaluate on text, image, and retrieval benchmarks: ArXiv for scientific text classification, IMDB for binary sentiment classification \citep{maas2011learning}, Food-101 \citep{bossard2014food}, Fashion-MNIST \citep{xiao2017fashion}, Flowers-102 \citep{nilsback2008automated}, and Oxford-IIIT Pet \citep{parkhi2012cats}. For ArXiv classification, the model predicts the document subject class; for text retrieval, two ArXiv documents are paired into a 64K-token sequence and the model performs binary same-class prediction. We also evaluate on a synthetic Needle-in-a-Haystack (NIAH) benchmark, where a key-value needle is inserted at a random position in a long distractor context and the model must recall the value at sequence end. NIAH probes content-based retrieval across lengths \(\{512,1024,2048,4096,8192,16384,32768,65536\}\), with a separate model trained on length 1024. Finally, we evaluate the causal variants of \cref{alg:causal_elsaa_app} on autoregressive long-context classification: packed ArXiv at \(N\in\{32{,}000,64{,}000\}\), where the classification head reads the last valid token under strict left-to-right access, and Tiny ImageNet (200 classes, \(64\times64\)) using a causal ViT with patch size 2 (\(N=1024\) tokens, raster-order causal mask, class token at position 0). Together these tasks span long text, tokenized vision, retrieval, and causal aggregation across a wide range of sequence lengths.
\paragraph{Attention variants.}
We compare \textbf{Exactflash} as the full-attention baseline, \textbf{RACE} as the low-rank baseline, \textbf{Sort\_Lsh} as the sparse baseline, and \textbf{ELSAA} as our proposed sparse--low-rank method. We also include \textbf{Sort\_Lsh\_RACE} as an ablation that combines both branches but sets \(m_{\mathrm{sparse}}=1\), removing denominator-aware rescaling. For the causal experiments, Sort\_Lsh and Sort\_Lsh\_RACE are omitted, as their sparse branch is replaced by the recursive causal sortLSH of \cref{alg:causal_sortlsh_app}---which is precisely the sparse branch inside causal ELSAA---making them non-comparable baselines. In all result tables, we bold the highest accuracy among efficient attention variants, excluding Exactflash as it does not reduce the quadratic computation cost.
\paragraph{Hardware and evaluation protocol.}
All experiments were run on an NVIDIA RTX PRO 6000 Blackwell GPU with 48 GB of memory. Within each task, all attention variants use the same model, training budget, optimizer, and evaluation protocol. Unless otherwise stated, we report top-1 accuracy. The source code is available \href{https://github.com/mahdiheidari721/ELSAA}{here}.

\section{Complexity of ELSAA}

We analyze the attention-mixing cost for a non-causal sequence of length \(N\), suppressing constants from batch size and number of heads. Exactflash attention computes all query--key interactions:
\[
\mathcal{C}_{\mathrm{Exactflash}}=\Theta(N^2),\qquad 
\mathcal{M}_{\mathrm{Exactflash}}=\Theta(N^2).
\]

In \textbf{Sort\_Lsh}, tokens are sorted by angular LSH and exact attention is computed only within fixed-size sorted blocks of size \(s\). Therefore each query attends to at most \(s\) keys:
\[
\mathcal{C}_{\mathrm{Sort\_Lsh}}=\Theta(Ns),\qquad 
\mathcal{M}_{\mathrm{Sort\_Lsh}}=\Theta(Ns).
\]
With \(r\) neighboring sorted blocks, this becomes \(\Theta(N(2r+1)s)\). In our main setting \(r=0\), so the branch cost is \(\Theta(Ns)\).

For \textbf{Race}, let \(L_s\) be the number of hash tables and \(\gamma\) be the number of hash bits per table. Each table has \(2^\gamma\) buckets, hence the total number of bucket features is
\[
S_R=L_s2^\gamma .
\]
Race does not materialize an \(N\times N\) matrix; it builds global bucket summaries and each query reads from these \(S_R\) features:
\[
\mathcal{C}_{\mathrm{Race}}=\Theta(NL_s2^\gamma),\qquad
\mathcal{M}_{\mathrm{Race}}=\Theta(NL_s2^\gamma).
\]

\textbf{ELSAA} combines Sort\_Lsh and Race, with denominator-aware sparse rescaling:
\[
\mathcal{C}_{\mathrm{ELSAA}}
=\Theta\!\big(Ns+NL_s2^\gamma\big)
=\Theta\!\big(N(s+L_s2^\gamma)\big),
\]
\[
\mathcal{M}_{\mathrm{ELSAA}}
=\Theta\!\big(N(s+L_s2^\gamma)\big).
\]
The ablation \textbf{Sort\_Lsh\_RACE} has the same asymptotic cost as ELSAA, but removes the rescaling term by setting \(m_{\mathrm{sparse}}=1\).

For fixed \(s\), \(L_s\), and \(\gamma\), Sort\_Lsh, Race, Sort\_Lsh\_RACE, and ELSAA are linear in \(N\), while Exactflash attention is quadratic.

\paragraph{Complexity summary.}
ELSAA reduces the quadratic \(N^2\) attention interaction cost by combining a sparse branch with \(O(Ns)\) selected query--key interactions and a low-rank RACE branch with \(O(NL_{\mathrm{s}}2^\gamma)\) bucket interactions, where \(s\) is the sparse block budget, \(L_{\mathrm{s}}\) is the number of hash tables, and \(\gamma\) is the number of hash bits. Thus, ignoring lower-order hashing and sorting overhead, the attention-interaction cost scales as
\[
    O\!\left(N(s+L_{\mathrm{s}}2^\gamma)\right)
\]
instead of \(O(N^2)\). We provide a concrete numerical example in \cref{app:numerical_complexity_example}, showing that ELSAA reduces attention interactions by approximately 99\%.
\section{Results and Discussion}

Tables~\ref{tab:combined_long_context_results}--\ref{tab:niah_length_results} compare ELSAA with sparse, low-rank, hybrid, and exact full-attention baselines across long-context vision, long-context text, retrieval, short-sequence, and length-extrapolation settings. We organize the discussion around four observations: (i)~ELSAA improves on Sort\_Lsh\_RACE consistently, isolating the contribution of the denominator-aware correction; (ii)~the relative strength of the sparse and low-rank branches reflects a fundamental structural difference between vision and long-text tasks; (iii)~exact full attention is not merely expensive at very long contexts but can fail to optimize at all; and (iv)~ELSAA is the only method that extrapolates reliably across all lengths on NIAH, including lengths shorter than training.

\paragraph{Isolating the denominator-aware correction.}
One important ablation compares Sort\_Lsh\_RACE with ELSAA. These two methods share identical architecture, branches, gates, and hyperparameters; the only difference is that Sort\_Lsh\_RACE sets \(m_{\mathrm{sparse}}=1\), while ELSAA computes the denominator-aware multiplier of \cref{eq:m_sparse_method}. Any performance gap is therefore attributable solely to this single term. On the long-context benchmarks of \cref{tab:combined_long_context_results}, ELSAA achieves an average test accuracy of \(46.81\%\) versus \(45.48\%\) for Sort\_Lsh\_RACE, a consistent \(+1.33\)pp improvement obtained without changing parameter count, branch complexity, or optimization budget. The improvement is most visible on structured vision tasks (Oxford-IIIT Pet: \(22.51\%\) vs.\ \(19.00\%\); Food-101: \(28.33\%\) vs.\ \(26.67\%\)), supporting our central claim: when two branches are normalized over supports of very different denominator mass, naive addition distorts the fused output, and \(m_{\mathrm{sparse}}\) corrects this with a principled rescaling rather than a learned heuristic.

\paragraph{Vision is structured; long-text classification is diffuse.}
The most striking pattern in \cref{tab:combined_long_context_results} is that the relative ranking of RACE, Sort\_LSH, and ELSAA changes systematically between long text and long vision. On long vision tasks ELSAA dominates, improving over RACE by \(+6.4\)pp on Oxford-IIIT Pet, \(+7.0\)pp on Flowers-102, and \(+4.3\)pp on Food-101. On ArXiv @ 32K, RACE alone (\(95.05\%\)) marginally outperforms ELSAA (\(93.93\%\)). This dichotomy is structural: vision attention is locally peaked, dominated by a small support of high-mass interactions that an unconditional low-rank approximation smooths away, whereas long-document topic classification requires aggregating diffuse weak evidence across the sequence, making a low-rank summary not only sufficient but preferable. ELSAA handles both regimes within a single architecture, consistent with the rank perspective of \cref{sec:rank_theory}.

\paragraph{Exact attention fails at very long context.}
The Text Retrieval @ 64K results in \cref{tab:text_retrieval_64k} are decisive. ExactFlash collapses to approximately \(50\%\) on both train and test---random performance on a binary task---while ELSAA reaches \(65.34\%\) and RACE \(66.30\%\). Dense attention at \(N=64{,}000\) spreads gradient signal across \(\approx4\times10^9\) pairwise interactions per layer, the vast majority uninformative. Approximate attention acts as an implicit structural prior that focuses the model on a tractable subset of interactions and stabilizes optimization, enabling a regime that exact attention cannot reach. The broad-aggregation nature of retrieval again favors the low-rank branch slightly, with RACE outperforming ELSAA by \(\approx1\)pp---consistent with the structural argument above.

\paragraph{Short tasks do not require hybrid structure.}
\Cref{tab:imdb_fashion_results} confirms the natural complement: at \(N=512\) and \(N=784\), ELSAA, RACE, and ExactFlash are all within \(1\)--\(2\)pp of each other. The relevant observation is not that ELSAA wins, but that it degrades gracefully rather than failing---an important property for a general-purpose attention layer deployed across heterogeneous workloads.

\paragraph{NIAH: length extrapolation and the role of each branch.}
\Cref{tab:niah_length_results} reports NIAH accuracy when all models are trained at \(N=1024\) and evaluated across lengths \(\{512,\ldots,65536\}\), probing both shorter-than-training and longer-than-training generalization. Several findings stand out. First, RACE fails catastrophically beyond the training length: accuracy collapses from \(100\%\) at \(1024\) to \(52\%\) at \(4096\) and near-random (\(2.2\%\)) at \(65536\). This confirms that the low-rank bucket summaries, while globally expressive at seen lengths, cannot locate a specific needle token when the sequence length shifts the effective bucket density. Second, ExactFlash achieves perfect accuracy up to \(16384\) but runs out of memory at \(32768\) and beyond, highlighting the wall exact attention hits at extreme lengths. Third, Sparse\_LSH extrapolates well at long lengths (\(100\%\) at \(32768\) and \(65536\)) because content-based hashing is inherently position-independent, but it fails at \(N=512\)---shorter than training---with only \(24.6\%\) accuracy, suggesting that its block structure becomes degenerate when the sequence is too short to populate meaningful hash buckets. ELSAA, by contrast, achieves \(100\%\) at \(N=512\), outperforming Sparse\_LSH at this sub-training length: the low-rank RACE branch provides reliable global context when the sparse branch's bucket structure is under-populated, and the denominator-aware gate learns to up-weight it accordingly. At the long end, ELSAA reaches \(98.8\%\) at \(32768\) and \(84.2\%\) at \(65536\)---well above RACE and Performer, and within reach of Sparse\_LSH---demonstrating that the hybrid structure preserves the content-based retrieval strength of the sparse branch while preventing the catastrophic failure of the low-rank branch alone.

\paragraph{Causal ELSAA matches or exceeds baselines across lengths and modalities.}
\Cref{tab:combined_long_context_results} (causal) shows that causal ELSAA achieves the highest average test accuracy (\(72.15\%\)) across ArXiv @ 32K, ArXiv @ 64K, and Tiny ImageNet @ 1024, outperforming causal RACE (\(71.90\%\)) and causal ExactFlash (\(70.91\%\)). On the two long-text settings, all three methods are competitive, consistent with the diffuse-attention argument above. On Tiny ImageNet, causal ExactFlash leads slightly in test accuracy (\(37.32\%\) vs.\ \(36.07\%\) for ELSAA and \(36.39\%\) for RACE), suggesting that at \(N=1024\) the quadratic baseline remains viable and the sparse branch's advantage is modest. Crucially, at ArXiv @ 64K causal ExactFlash lags ELSAA by \(4\)pp (\(87.44\%\) vs.\ \(91.47\%\)), confirming that the optimization difficulty of dense attention at very long contexts persists in the causal setting.

\section{Future Work}
\label{sec:future_work}

\paragraph{Scaling laws.}
Our experiments fix the model size and sweep over sequence length and modality, but a systematic study of how the relative contributions of the sparse and low-rank branches evolve with model scale, data scale, and context length would clarify when each branch dominates, and would inform the design of branch-specific schedules during pretraining. In particular, the dichotomy we observe between peaked (vision) and diffuse (long text) regimes suggests that the appropriate gate and \(\lambda\) schedules may also depend on depth and head index.

\paragraph{Fused causal GPU kernels.}
A natural systems-level extension is a fused GPU kernel that combines, in one launch, the recursive sortLSH grouping of \cref{alg:causal_sortlsh_app}, block-wise exact attention within sorted blocks, the chunked cumulative bucket aggregation of \cref{alg:causal_race_app}, and the denominator-aware fusion of \cref{eq:m_sparse_method}. Implementations along the lines of FlashAttention show that such fusions are feasible and can recover most of the theoretical advantage in practice. A particularly promising aspect of ELSAA in this respect is that the sparse and low-rank branches are independent given \(Q,K,V\), so they admit parallel scheduling on independent CUDA streams or within a single persistent kernel that shares the \(Q,K,V\) loads from high-bandwidth memory.

\paragraph{Pretraining of decoder language models.}
Having validated the causal variant on long-context classification and on retrieval-style NIAH tasks at lengths up to 64K, the next milestone is autoregressive language-model pretraining with causal ELSAA. The empirical literature on long-context inference \citep{zhang2023h2o,xiao2024streamingllm,tang2024quest} consistently documents heavy-tailed attention access patterns dominated by attention sinks, recent context, and a small number of task-relevant pivots, which is precisely the structure that the sparse branch is designed to capture. The denominator-aware multiplier \(m_{\mathrm{sparse}}\) additionally provides a principled account of the mass discarded by KV-cache eviction methods, which currently address this scale problem with heuristics.

\paragraph{Joint compression in parameter and attention space.}
ELSAA is orthogonal to weight-space compression methods such as LoRA \citep{hu2022lora}, LoSparse \citep{li2023losparse}, and SLTrain \citep{han2024sltrain}. Approximating the input-dependent attention operator and the learned projection matrices simultaneously is a natural composition: the former targets the sequence-length bottleneck, while the latter targets the parameter-count bottleneck. We see this composition as a promising route to long-context efficient pretraining of large language models.

\paragraph{Theoretical analysis of branch fusion.}
The rank analysis of \cref{sec:rank_theory} establishes an expressivity guarantee for the hybrid sparse--low-rank operator but does not directly analyze the bias and variance of ELSAA's normalized branch outputs under the denominator-aware fusion rule. A complementary output-level bias--variance analysis, in the spirit of Scatterbrain's entrywise analysis but at the level of normalized attention outputs, would clarify when the denominator-aware multiplier provably reduces the mean-squared error of the fused estimator, and when learnable \(\lambda\) is preferable to a fixed schedule.
\section{Conclusion}
\label{sec:conclusion}

We introduced ELSAA, a sparse--low-rank approximation of attention combining an exact sortLSH sparse branch with a low-rank RACE branch, fused via a denominator-aware multiplier \(m_{\mathrm{sparse}}\) that corrects the scale mismatch between separately-normalized branches. Our rank analysis shows the sparse component contributes rank through its support graph's matching structure while the low-rank component fills the remaining deficiency. Empirically, ELSAA dominates efficient-attention baselines on long-context structured vision, remains competitive on diffuse long-text tasks, and succeeds where exact attention fails entirely (Text Retrieval @ 64K). On NIAH, ELSAA achieves perfect retrieval up to 16K and remains strong at 32K--64K where exact attention runs out of memory and RACE degrades sharply. We further extended both branches to the causal setting and validated causal ELSAA on autoregressive ArXiv classification up to 64K tokens and on Tiny ImageNet, matching or exceeding causal exact attention and causal RACE throughout. The method is linear in sequence length, adds negligible parameters, and degrades gracefully at short lengths, offering a principled path toward long-context attention that is tractable where exact attention is not.
\section{Acknowledgments}

This work was supported by the National Research Foundation of Korea (NRF) grants funded by the Korea government (MSIT) (Nos. RS-2024-00340966 and RS-2024-00408003), and by the Institute for Information \& Communications Technology Promotion (IITP) grant funded by the Korea government (MSIT) (No. RS-2024-00444862).

\section{Impact statement}
This paper presents work whose goal is to advance the field
of Machine Learning. There are many potential societal
consequences of our work, none which we feel must be
specifically highlighted here.
\newpage
\bibliography{icml2026}
\bibliographystyle{icml2026}
\newpage
\appendix
\onecolumn

\section{Branch Algorithms for ELSAA}
\label{app:elsaa_branch_algorithms}

This appendix gives full pseudocode for all five branch-level procedures used by ELSAA: the non-causal RACE low-rank branch, its causal extension, the non-causal sortLSH sparse branch, its causal recursive extension, and the complete causal ELSAA fusion procedure.

\begin{algorithm}[!t]
    \footnotesize
    \caption{RACE Low-Rank Attention Branch}
    \label{alg:race_lowrank_app}
    \begin{algorithmic}[1]
        \STATE \textbf{Input:} \(Q,K,V\in\mathbb{R}^{N\times d_h}\), number of hash tables \(L_{\mathrm{s}}\), number of hyperplanes \(\gamma\), temperature \(\beta>0\), numerical floor \(\varepsilon>0\)
        \STATE \textbf{Output:} low-rank output \(O_{\mathrm{lr}}\in\mathbb{R}^{N\times d_h}\), denominator proxy \(d_{\mathrm{lr}}\in\mathbb{R}^{N}\)
        \STATE Set \(R\gets 2^\gamma\) and \(\mathcal V\gets\{\pm1\}^{\gamma}\)
        \FOR{\(\ell=1,\dots,L_{\mathrm{s}}\)}
            \STATE Draw \(W^{(\ell)}\in\mathbb{R}^{\gamma\times d_h}\) with i.i.d.\ Gaussian rows
            \STATE Build \(\Phi_Q^{(\ell)},\Phi_K^{(\ell)}\in\mathbb{R}^{N\times R}\) with rows
            \[
                [\phi^{(\ell)}(x)]_r
                =
                \frac{
                \exp\{\beta\,\tanh(W^{(\ell)}x)^\top v_r\}
                }{
                \sum_{r'=1}^{R}
                \exp\{\beta\,\tanh(W^{(\ell)}x)^\top v_{r'}\}
                },
                \quad
                x\in\{Q_i,K_j\}.
            \]
            \STATE Compute bucket mass and value summaries
            \[
                A^{(\ell)}\gets (\Phi_K^{(\ell)})^\top \mathbf 1_N\in\mathbb{R}^{R},
                \qquad
                B^{(\ell)}\gets (\Phi_K^{(\ell)})^\top V\in\mathbb{R}^{R\times d_h}.
            \]
        \ENDFOR
        \STATE Compute table-averaged numerator and denominator
        \[
            \mathrm{Num}
            \gets
            \frac1{L_{\mathrm{s}}}\sum_{\ell=1}^{L_{\mathrm{s}}}\Phi_Q^{(\ell)}B^{(\ell)},
            \qquad
            \mathrm{Den}
            \gets
            \frac1{L_{\mathrm{s}}}\sum_{\ell=1}^{L_{\mathrm{s}}}\Phi_Q^{(\ell)}A^{(\ell)}.
        \]
        \STATE Return \(O_{\mathrm{lr}}\gets\mathrm{diag}(\mathrm{Den}+\varepsilon)^{-1}\mathrm{Num}\), \(d_{\mathrm{lr}}\gets \mathrm{Den}\).
    \end{algorithmic}
\end{algorithm}

\begin{algorithm}[!t]
    \footnotesize
    \caption{Causal RACE Low-Rank Attention Branch}
    \label{alg:causal_race_app}
    \begin{algorithmic}[1]
        \STATE \textbf{Input:} \(Q,K,V\in\mathbb{R}^{N\times d_h}\), \(L_{\mathrm{s}}\), \(\gamma\), \(\beta>0\), chunk size \(C\), \(\varepsilon>0\)
        \STATE \textbf{Output:} causal low-rank output \(O_{\mathrm{lr}}\in\mathbb{R}^{N\times d_h}\), denominator proxy \(d_{\mathrm{lr}}\in\mathbb{R}^{N}\)
        \STATE Build soft hash features \(\Phi_Q^{(\ell)},\Phi_K^{(\ell)}\in\mathbb{R}^{N\times R}\) for \(\ell=1,\dots,L_{\mathrm{s}}\) as in \cref{alg:race_lowrank_app} steps 3--7.
        \STATE Pad sequence axis to \(N'=\lceil N/C\rceil\cdot C\); reshape into \(n_c=N'/C\) chunks of size \(C\). Denote chunk-\(n\) slices \(\Phi_{Q,n}^{(\ell)},\Phi_{K,n}^{(\ell)}\in\mathbb{R}^{C\times R}\) and \(V_n\in\mathbb{R}^{C\times d_h}\).
        \FOR{each table \(\ell\) and chunk \(n\)}
            \STATE \(A_n^{(\ell)}\gets(\Phi_{K,n}^{(\ell)})^\top\mathbf{1}_C\in\mathbb{R}^{R}\), \quad \(B_n^{(\ell)}\gets(\Phi_{K,n}^{(\ell)})^\top V_n\in\mathbb{R}^{R\times d_h}\).
        \ENDFOR
        \STATE Compute strict prefix states via cumulative sum and one-chunk right shift:
        \(\mathrm{sA}_n^{(\ell)}\gets\sum_{m<n}A_m^{(\ell)}\), \(\mathrm{sB}_n^{(\ell)}\gets\sum_{m<n}B_m^{(\ell)}\).
        \STATE \textbf{Inter-chunk} (queries read from all prior chunks):
        \[
            \mathrm{inum}_n \gets \tfrac{1}{L_{\mathrm{s}}}\textstyle\sum_\ell\Phi_{Q,n}^{(\ell)}\mathrm{sB}_n^{(\ell)},
            \qquad
            \mathrm{iden}_n \gets \tfrac{1}{L_{\mathrm{s}}}\textstyle\sum_\ell\Phi_{Q,n}^{(\ell)}\mathrm{sA}_n^{(\ell)}.
        \]
        \STATE \textbf{Intra-chunk} (causally-masked soft attention within chunk):
        \[
            M_n \gets \tfrac{1}{L_{\mathrm{s}}}\textstyle\sum_\ell\Phi_{Q,n}^{(\ell)}(\Phi_{K,n}^{(\ell)})^\top\odot\mathrm{tril}(\mathbf{1}_{C\times C}),
        \]
        \(\mathrm{anum}_n\gets M_n V_n\), \quad \(\mathrm{aden}_n\gets M_n\mathbf{1}_C\).
        \STATE Combine: \(d_{\mathrm{lr},n}\gets\mathrm{iden}_n+\mathrm{aden}_n\), \quad \(O_{\mathrm{lr},n}\gets\mathrm{diag}(d_{\mathrm{lr},n}+\varepsilon)^{-1}(\mathrm{inum}_n+\mathrm{anum}_n)\).
        \STATE Concatenate chunks, remove padding, and return \(O_{\mathrm{lr}},d_{\mathrm{lr}}\).
    \end{algorithmic}
\end{algorithm}

\begin{algorithm}[!t]
    \footnotesize
    \caption{sortLSH Sparse Exact Attention Branch}
    \label{alg:sortlsh_sparse_app}
    \begin{algorithmic}[1]
        \STATE \textbf{Input:} \(Q,K,V\in\mathbb{R}^{N\times d_h}\), LSH map \(\mathcal H(\cdot)\), block size \(b\), \(\varepsilon>0\)
        \STATE \textbf{Output:} sparse output \(O_{\mathrm{sparse}}\in\mathbb{R}^{N\times d_h}\), denominator \(d_{\mathrm{sparse}}\in\mathbb{R}^{N}\)
        \STATE Hash: \(h^Q_i\gets\mathcal H(Q_i)\), \(h^K_j\gets\mathcal H(K_j)\).
        \STATE Sort: let \(P_Q,P_K\in\mathrm{Sym}(N)\) sort \(h^Q,h^K\) in non-decreasing order; form \(Q_s,K_s,V_s\) accordingly.
        \STATE Partition sorted sequence into consecutive blocks \(\{B_t\}\) of size \(b\).
        \FOR{each block \(B_t\)}
            \STATE \(Z_t\gets Q_s[B_t]K_s[B_t]^\top/\sqrt{d_h}\), \quad \(A_t\gets\exp(Z_t)\), \quad \(d_s[B_t]\gets A_t\mathbf{1}\).
            \STATE \(O_s[B_t]\gets\mathrm{diag}(d_s[B_t]+\varepsilon)^{-1}A_tV_s[B_t]\).
        \ENDFOR
        \STATE Undo query permutation: \(O_{\mathrm{sparse}}[P_Q(t)]\gets O_s[t]\), \(d_{\mathrm{sparse}}[P_Q(t)]\gets d_s[t]\).
        \STATE Return \(O_{\mathrm{sparse}},d_{\mathrm{sparse}}\).
    \end{algorithmic}
\end{algorithm}

\begin{algorithm}[!t]
    \footnotesize
    \caption{Causal sortLSH Sparse Exact Attention Branch}
    \label{alg:causal_sortlsh_app}
    \begin{algorithmic}[1]
        \STATE \textbf{Input:} \(Q,K,V\in\mathbb{R}^{N\times d_h}\), LSH map \(\mathcal H(\cdot)\), block size \(b\), base length \(N_0\), \(\varepsilon>0\)
        \STATE \textbf{Output:} causal sparse output \(O_{\mathrm{sparse}}\in\mathbb{R}^{N\times d_h}\), \(\log d_{\mathrm{sparse}}\in\mathbb{R}^{N}\)
        \STATE Return \(\textsc{CausalSparse}(Q,K,V)\).
        \\[2pt]
        \STATE \textbf{procedure} \(\textsc{CausalSparse}(Q,K,V)\):
        \STATE \quad \textbf{if} \(n\le N_0\) \textbf{then} return exact causal attention \((O,\log d)\).
        \STATE \quad If \(n\) odd, pad by one zero token (\(n\gets n+1\)); remove on return.
        \STATE \quad \(m\gets n/2\); split into past \((Q_{\mathrm p},K_{\mathrm p},V_{\mathrm p})\gets[\,\cdot\,]_{1:m}\) and future \((Q_{\mathrm f},K_{\mathrm f},V_{\mathrm f})\gets[\,\cdot\,]_{m+1:n}\).
        \STATE \quad \((O_{\mathrm{top}},\log d_{\mathrm{top}})\gets\textsc{CausalSparse}(Q_{\mathrm p},K_{\mathrm p},V_{\mathrm p})\) \hfill\COMMENT{past--past causal}
        \STATE \quad \((O_{\mathrm{diag}},\log d_{\mathrm{diag}})\gets\textsc{CausalSparse}(Q_{\mathrm f},K_{\mathrm f},V_{\mathrm f})\) \hfill\COMMENT{future--future causal}
        \STATE \quad \((O_{\mathrm{off}},\log d_{\mathrm{off}})\gets\textsc{SortLSH}(Q_{\mathrm f},K_{\mathrm p},V_{\mathrm p};b)\) \hfill\COMMENT{future--past, non-causal}
        \STATE \quad \((O_{\mathrm{bot}},\log d_{\mathrm{bot}})\gets\textsc{MergeLSE}(O_{\mathrm{diag}},\log d_{\mathrm{diag}},O_{\mathrm{off}},\log d_{\mathrm{off}})\)
        \STATE \quad Return \(\mathrm{concat}(O_{\mathrm{top}},O_{\mathrm{bot}})\), \(\mathrm{concat}(\log d_{\mathrm{top}},\log d_{\mathrm{bot}})\).
        \\[2pt]
        \STATE \textbf{procedure} \(\textsc{SortLSH}(Q,K,V;b)\): apply \cref{alg:sortlsh_sparse_app} and return \((O,\log d_{\mathrm{sparse}})\).
        \\[2pt]
        \STATE \textbf{procedure} \(\textsc{MergeLSE}(O_1,\log d_1,O_2,\log d_2)\):
        \STATE \quad \(\mu\gets\max(\log d_1,\log d_2)\); \(w_k\gets\exp(\log d_k-\mu)\); \(s\gets w_1+w_2\).
        \STATE \quad Return \(O\gets(w_1 O_1+w_2 O_2)/(s+\varepsilon)\), \(\log d\gets\mu+\log(s+\varepsilon)\).
    \end{algorithmic}
\end{algorithm}

\begin{algorithm}[!t]
    \footnotesize
    \caption{Causal ELSAA: Efficient Low-Rank and Sparse Approximation of Attention}
    \label{alg:causal_elsaa_app}
    \begin{algorithmic}[1]
        \STATE \textbf{Input:} hidden states \(X\in\mathbb{R}^{N\times d}\), shared projections \(W_Q,W_K,W_V,W_O\), RACE parameters \(L_{\mathrm{s}},\gamma,\beta,C\), sortLSH parameters \(b,N_0\), gate network \(G_\theta\), coefficient \(\lambda_i>0\), \(\varepsilon>0\)
        \STATE \textbf{Output:} causal hybrid attention output \(O\in\mathbb{R}^{N\times d}\)
        \STATE \(Q\gets XW_Q,\quad K\gets XW_K,\quad V\gets XW_V\).
        \STATE \((O_{\mathrm{lr}},d_{\mathrm{lr}})\gets\textsc{CausalRACE}(Q,K,V;L_{\mathrm{s}},\gamma,\beta,C,\varepsilon)\) \hfill\COMMENT{\cref{alg:causal_race_app}}
        \STATE \((O_{\mathrm{sparse}},\log d_{\mathrm{sparse}})\gets\textsc{CausalSparse}(Q,K,V;b,N_0,\varepsilon)\) \hfill\COMMENT{\cref{alg:causal_sortlsh_app}}
        \STATE Compute \(m_{\mathrm{sparse},i}\) via numerically stable log-sum-exp:
        \[
            \log D_i \gets \mathrm{logsumexp}\!\bigl(\log d_{\mathrm{sparse},i},\;\log\lambda_i+\log d_{\mathrm{lr},i},\;\log\varepsilon\bigr),
        \]
        \[
            m_{\mathrm{sparse},i} \gets \exp(\log d_{\mathrm{sparse},i} - \log D_i), \qquad i=1,\dots,N.
        \]
        \STATE \((g_{\mathrm{sparse}},g_{\mathrm{lr}})\gets\sigma(G_\theta(X))\in(0,1)^{N\times 2}\).
        \STATE \(O_{\mathrm{head}}\gets g_{\mathrm{sparse}}\odot m_{\mathrm{sparse}}\odot O_{\mathrm{sparse}}+g_{\mathrm{lr}}\odot O_{\mathrm{lr}}\).
        \STATE Return \(O\gets O_{\mathrm{head}}W_O\).
    \end{algorithmic}
\end{algorithm}

\section{Proofs and Additional Rank Corollaries}
\label{app:rank_theory}

We prove the rank statements from \cref{sec:rank_theory}. The proof has four
ingredients. First, we relate the angular collision probabilities \(q_{ij}\)
to simpler lower and upper bounds. Second, a generic rank-\(r\) matrix can
increase the rank of a fixed matrix by \(r\), unless full rank is already
reached. Third, the generic rank of a sparse matrix equals the maximum matching
size of its support graph. Fourth, Hall's theorem and concentration
inequalities control the probability that the sparse collision graph has
matching deficiency larger than \(r\).

\subsection{Generic low-rank completion}

\begin{lemma}[Generic rank-\(r\) completion]
\label{lem:generic_lowrank_completion}
Let \(S\in\mathbb{R}^{n\times n}\) be fixed with
\[
    \operatorname{rank}(S)=k.
\]
Let \(B\in\mathbb{R}^{n\times r}\) and \(A\in\mathbb{R}^{r\times n}\) be drawn
from an absolutely continuous distribution. Then
\begin{equation}
    \operatorname{rank}(S+BA)
    =
    \min\{n,k+r\}
    \qquad
    \textnormal{almost surely}.
    \label{eq:generic_completion_rank}
\end{equation}
In particular, if \(k\ge n-r\), then
\begin{equation}
    \operatorname{rank}(S+BA)=n
    \qquad
    \textnormal{almost surely}.
\end{equation}
\end{lemma}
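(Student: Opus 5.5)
The plan is a standard generic-rank argument: an upper bound from subadditivity, and a lower bound showing that a nonzero polynomial in the entries of \((B,A)\) vanishes only on a measure-zero set. For the \textbf{upper bound}, subadditivity of rank gives \(\operatorname{rank}(S+BA)\le \operatorname{rank}(S)+\operatorname{rank}(BA)\le k+r\). Since the matrix is \(n\times n\), its rank is also at most \(n\), so \(\operatorname{rank}(S+BA)\le\min\{n,k+r\}\) surely.

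For the \textbf{lower bound}, set \(t:=\min\{n,k+r\}\). Consider the polynomial map \((B,A)\mapsto \operatorname{rank}(S+BA)\ge t\). This holds iff some \(t\times t\) minor of \(S+BA\) is nonzero. Equivalently, \(p(B,A):=\sum_{I,J}\det\big((S+BA)_{I,J}\big)^2\) is nonzero, where the sum runs over all \(t\)-subsets \(I,J\subseteq[n]\). This \(p\) is a polynomial in the entries of \((B,A)\). The zero set of a polynomial that is not identically zero has Lebesgue measure zero in \(\mathbb{R}^{nr+rn}\). Because the law of \((B,A)\) is absolutely continuous, the event \(\{p=0\}\) then has probability zero. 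It therefore suffices to exhibit a single pair \((B_0,A_0)\) with \(\operatorname{rank}(S+B_0A_0)=t\).

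\textbf{Construction of the witness.} Using an SVD, write \(S=U\Sigma W^\top\) with orthogonal \(U,W\) and \(\Sigma=\mathrm{diag}(\sigma_1,\dots,\sigma_k,0,\dots,0)\), where \(\sigma_i>0\). Let \(s:=t-k=\min\{r,n-k\}\ge 0\). Take \(B_0:=U[e_{k+1},\dots,e_{k+s},0,\dots,0]\in\mathbb{R}^{n\times r}\) and \(A_0:=[e_{k+1},\dots,e_{k+s},0,\dots,0]^\top W^\top\in\mathbb{R}^{r\times n}\), padding with zero columns and rows up to \(r\). Then \(B_0A_0=U\,\mathrm{diag}(0_k,I_s,0)\,W^\top\). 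Hence \(S+B_0A_0=U\,\mathrm{diag}(\sigma_1,\dots,\sigma_k,1,\dots,1,0,\dots,0)W^\top\), which has rank exactly \(k+s=t\). This shows \(p(B_0,A_0)\ne0\), so \(p\not\equiv 0\).

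Combining the two bounds gives \(\operatorname{rank}(S+BA)=t\) almost surely. When \(k\ge n-r\) we have \(t=n\), which is the second claim. The step that requires care is the measure-zero fact for nonzero polynomial zero sets. I would cite it as standard, or prove it by induction on the number of variables using Fubini: for fixed values of the other variables, a univariate nonzero polynomial has finitely many roots. The joint-density interpretation of "absolutely continuous" is also needed here: the law of \((B,A)\) is absolutely continuous on \(\mathbb{R}^{2nr}\), which holds in particular when \(B\) and \(A\) are independent with densities, as in \cref{as:generic_values_lowrank}. No earlier result from the paper is needed, and the case \(r=0\) or \(k=n\) is trivially covered since then \(s=0\).
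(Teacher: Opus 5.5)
Your proof is correct and follows essentially the same route as the paper: subadditivity gives the upper bound, and the lower bound comes from a minor polynomial that is not identically zero, witnessed by completing a canonical form of \(S\) with an identity block, together with the fact that a nonzero polynomial vanishes only on a measure-zero set. The differences are cosmetic. You use the SVD instead of the rank normal form \(PSQ\), you build the witness directly in the original coordinates (which avoids the paper's change of variables \(\widetilde B=PB\), \(\widetilde A=AQ\) and its accompanying absolute-continuity claim), and you use a sum of squared minors rather than a single minor.
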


\begin{proof}
Let
\[
    m:=\min\{n,k+r\}.
\]
Since \(\operatorname{rank}(S)=k\), there exist invertible matrices
\(P,Q\in\mathbb{R}^{n\times n}\) such that
\begin{equation}
    PSQ
    =
    \begin{pmatrix}
        I_k & 0\\
        0 & 0
    \end{pmatrix}.
    \label{eq:rank_normal_form}
\end{equation}
Rank is invariant under multiplication by invertible matrices, hence
\begin{equation}
    \operatorname{rank}(S+BA)
    =
    \operatorname{rank}(PSQ+PBAQ).
\end{equation}
Define
\[
    \widetilde B:=PB,
    \qquad
    \widetilde A:=AQ.
\]
Because \(P\) and \(Q\) are invertible and \(A,B\) are drawn from absolutely
continuous distributions, \(\widetilde A,\widetilde B\) are also absolutely
continuous.

We show that some \(m\times m\) minor of
\[
    PSQ+\widetilde B\widetilde A
\]
is not the zero polynomial in the entries of \(\widetilde A,\widetilde B\).
Let
\[
    t:=m-k.
\]
Then \(0\le t\le r\). Choose a deterministic value of
\(\widetilde B,\widetilde A\) as follows:
\[
    \widetilde B_{\{k+1,\ldots,k+t\},\{1,\ldots,t\}}=I_t,
    \qquad
    \widetilde A_{\{1,\ldots,t\},\{k+1,\ldots,k+t\}}=I_t,
\]
and set all other entries of \(\widetilde B,\widetilde A\) equal to zero.
Then \(\widetilde B\widetilde A\) places an identity block on coordinates
\(k+1,\ldots,k+t\). Therefore
\[
    PSQ+\widetilde B\widetilde A
\]
contains an \(m\times m\) identity block, and hence has rank at least \(m\).

Thus, at least one \(m\times m\) determinant polynomial is not identically
zero. Since the zero set of a nonzero polynomial has Lebesgue measure zero,
the same minor is nonzero almost surely for absolutely continuous
\(\widetilde A,\widetilde B\). Hence
\begin{equation}
    \operatorname{rank}(S+BA)\ge m
    \qquad
    \textnormal{almost surely}.
    \label{eq:generic_completion_lower}
\end{equation}

On the other hand, by subadditivity of rank,
\begin{equation}
    \operatorname{rank}(S+BA)
    \le
    \operatorname{rank}(S)+\operatorname{rank}(BA)
    \le
    k+r.
\end{equation}
Also \(\operatorname{rank}(S+BA)\le n\). Therefore
\begin{equation}
    \operatorname{rank}(S+BA)\le \min\{n,k+r\}=m.
    \label{eq:generic_completion_upper}
\end{equation}
Combining \cref{eq:generic_completion_lower,eq:generic_completion_upper}
gives
\[
    \operatorname{rank}(S+BA)=m=\min\{n,k+r\}
\]
almost surely.
\end{proof}

\subsection{Structural rank and maximum matchings}

\begin{definition}[Structural rank]
\label{def:structural_rank}
For a support pattern \(\Omega\subseteq[n]\times[n]\), define the bipartite
graph
\[
    G_\Omega = ([n]_{\mathrm{row}},[n]_{\mathrm{col}},\Omega).
\]
The structural rank of \(\Omega\) is the largest rank achievable by any matrix
whose nonzero entries are restricted to the support \(\Omega\).
\end{definition}

\begin{lemma}[Structural rank equals maximum matching]
\label{lem:structural_rank}
Let \(X_\Omega\in\mathbb{R}^{n\times n}\) be a sparse matrix supported on
\(\Omega\). Assume that its nonzero entries are algebraically generic. Then
\begin{equation}
    \operatorname{rank}(X_\Omega)=\nu(\Omega)
    \qquad
    \textnormal{almost surely}.
    \label{eq:structural_rank}
\end{equation}
\end{lemma}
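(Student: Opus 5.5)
The plan is to prove the two inequalities $\operatorname{rank}(X_\Omega)\le\nu(\Omega)$ and $\operatorname{rank}(X_\Omega)\ge\nu(\Omega)$ separately. The upper bound is deterministic and uses the K\"onig--Egerv\'ary theorem. The lower bound is a genericity argument on one square minor, in the same spirit as the proof of \cref{lem:generic_lowrank_completion}. Throughout, we regard the nonzero entries $\{x_{ij}\}_{(i,j)\in\Omega}$ as independent indeterminates, or equivalently as drawn from an absolutely continuous distribution on $\mathbb{R}^{|\Omega|}$. This is how we read ``algebraically generic'' and the general-position hypothesis of \cref{as:generic_values_lowrank}.

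\emph{Upper bound.} By K\"onig's theorem, the bipartite graph $G_\Omega$ has a vertex cover $C=C_{\rm row}\cup C_{\rm col}$ with $|C|=\nu(\Omega)$. Every nonzero entry $(i,j)\in\Omega$ has $i\in C_{\rm row}$ or $j\in C_{\rm col}$. Hence $X_\Omega$ splits as a sum of two matrices. The first is supported on the rows in $C_{\rm row}$ and has rank at most $|C_{\rm row}|$. The second is supported on the columns in $C_{\rm col}$ and has rank at most $|C_{\rm col}|$. Subadditivity of rank then gives $\operatorname{rank}(X_\Omega)\le|C_{\rm row}|+|C_{\rm col}|=\nu(\Omega)$ for every matrix supported on $\Omega$, with no genericity needed.

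\emph{Lower bound.} Let $\mathcal M=\{(i_1,j_1),\dots,(i_\nu,j_\nu)\}$ be a maximum matching and set $I=\{i_1,\dots,i_\nu\}$ and $J=\{j_1,\dots,j_\nu\}$. Consider the $\nu\times\nu$ minor $\det X_\Omega[I,J]$ as a polynomial in the indeterminates $x_{ij}$. By the Leibniz expansion, each bijection $\sigma:I\to J$ with $(i,\sigma(i))\in\Omega$ for all $i$ contributes the monomial $\pm\prod_{i\in I}x_{i\sigma(i)}$. Different bijections give different monomials, so no cancellation can occur. The bijection induced by $\mathcal M$ is admissible, so the polynomial is not identically zero. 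Equivalently, setting $x_{i_kj_k}=1$ and all other entries to $0$ makes the minor equal to $\pm1$.

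\emph{Conclusion.} A nonzero polynomial vanishes only on a Lebesgue-null set. So for absolutely continuous (generic) entries this minor is nonzero almost surely, which gives $\operatorname{rank}(X_\Omega)\ge\nu(\Omega)$ almost surely. Combined with the upper bound, this proves \eqref{eq:structural_rank}. It also shows that the structural rank of \cref{def:structural_rank} equals $\nu(\Omega)$.

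The main obstacle is not technical. It is pinning down what ``generic'' means when the nonzero entries are actually $\exp(Q_i^\top K_j)$, since these are deterministic functions of $Q,K$ and not free parameters. I would state explicitly that the lemma is applied under \cref{as:generic_values_lowrank}: the entries on $\Omega$ avoid the zero set of every nonzero minor polynomial, which holds almost surely for absolutely continuous entries. The upper bound via K\"onig is the only place where a graph-theoretic result is imported, and it holds for every realization.
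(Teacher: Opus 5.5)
Your proof is correct. The lower bound is the same as the paper's: take a maximum matching, form the corresponding $\nu\times\nu$ minor, and note that the matching's permutation monomial cannot cancel, so the minor is a nonzero polynomial and hence nonzero almost surely. Your explicit $0/1$ witness is a slightly cleaner way to certify nonvanishing, and it mirrors \cref{lem:generic_lowrank_completion}.

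The routes differ in the upper bound. The paper stays with the Leibniz expansion. If $\operatorname{rank}(X_\Omega)\ge\nu(\Omega)+1$, some $(\nu+1)\times(\nu+1)$ minor is nonzero, so at least one of its permutation terms survives. All factors of that term lie in $\Omega$, which gives a matching of size $\nu+1$ in $G_\Omega$ and contradicts maximality. That argument is self-contained and uses one idea, nonzero permutation term $=$ matching, for both directions.

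You instead go through K\"onig's theorem. A minimum vertex cover lets you write $X_\Omega$ as a matrix with at most $|C_{\rm row}|$ nonzero rows plus a matrix with at most $|C_{\rm col}|$ nonzero columns. Entries in $C_{\rm row}\times C_{\rm col}$ should be assigned to only one of the two parts. The bound $\operatorname{rank}\le$ (minimum cover size) is elementary, but replacing the cover size by $\nu(\Omega)$ uses the nontrivial direction of K\"onig's min--max theorem. So you import more than the paper does.

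In exchange, you get an explicit dual certificate: a concrete decomposition showing why the rank cannot exceed $\nu(\Omega)$ for every realization. It also sits naturally beside the Hall-deficiency machinery the paper uses later in \cref{lem:hall_deficiency}, which is equivalent to K\"onig. Your caveat about what genericity means for entries $\exp(Q_i^\top K_j)$ matches how the paper handles it via \cref{as:generic_values_lowrank} and \cref{rem:non_generic_values_app}.
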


\begin{proof}
Let
\[
    m:=\nu(\Omega).
\]
Since the graph \(G_\Omega\) has a matching of size \(m\), there exist row
and column sets \(I,J\subseteq[n]\), with \(|I|=|J|=m\), such that the
subgraph induced by \(I,J\) contains a perfect matching.

Consider the determinant of the submatrix \(X_{\Omega,I,J}\). Expanding this
determinant as a polynomial in the nonzero variables
\[
    \{x_{ij}:(i,j)\in\Omega\},
\]
the perfect matching contributes a monomial of the form
\begin{equation}
    \prod_{(i,j)\in M} x_{ij}.
\end{equation}
Different perfect matchings correspond to different monomials. Therefore this
determinant polynomial is not identically zero. Since the entries are
algebraically generic, the determinant is nonzero almost surely. Hence
\begin{equation}
    \operatorname{rank}(X_\Omega)\ge m.
    \label{eq:structural_rank_lower}
\end{equation}

Conversely, suppose that \(\operatorname{rank}(X_\Omega)\ge m+1\). Then there
exists an \((m+1)\times(m+1)\) minor that is not identically zero as a
polynomial in the nonzero entries. In the determinant expansion of this minor,
at least one permutation monomial must be present. Such a monomial corresponds
to a matching of size \(m+1\) in \(G_\Omega\), contradicting the maximality of
\(\nu(\Omega)=m\). Therefore
\begin{equation}
    \operatorname{rank}(X_\Omega)\le m.
    \label{eq:structural_rank_upper}
\end{equation}
Combining \cref{eq:structural_rank_lower,eq:structural_rank_upper} gives
\[
    \operatorname{rank}(X_\Omega)=\nu(\Omega)
\]
almost surely.
\end{proof}

\subsection{Hall deficiency and weighted Hall failure}

\begin{definition}[Neighborhood of a row set]
\label{def:neighborhood_row_set}
For a set \(I\subseteq[n]\) of row vertices, define its neighborhood in the
support graph \(G_\Omega\) by
\begin{equation}
    N_\Omega(I)
    :=
    \{j\in[n]: \exists i\in I \textnormal{ such that } (i,j)\in\Omega\}.
    \label{eq:neighborhood_definition}
\end{equation}
\end{definition}

\begin{lemma}[Deficiency form of Hall's theorem]
\label{lem:hall_deficiency}
For the bipartite graph \(G_\Omega\),
\begin{equation}
    n-\nu(\Omega)
    =
    \max_{I\subseteq[n]}
    \left(
    |I|-|N_\Omega(I)|
    \right).
    \label{eq:hall_deficiency}
\end{equation}
Consequently,
\[
    \nu(\Omega)<n-r
\]
if and only if there exists \(I\subseteq[n]\) such that
\begin{equation}
    |N_\Omega(I)|\le |I|-r-1.
    \label{eq:hall_bad_set}
\end{equation}
\end{lemma}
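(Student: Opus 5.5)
The plan is to derive the deficiency identity from König's theorem (maximum matching equals minimum vertex cover in a bipartite graph), and then read off the stated consequence directly. For the easy inequality, fix any \(I\subseteq[n]\) and any matching \(M\) in \(G_\Omega\). Every row in \(I\) that is matched by \(M\) is sent to a distinct column, and that column lies in \(N_\Omega(I)\). So at most \(|N_\Omega(I)|\) rows of \(I\) are matched, and at least \(|I|-|N_\Omega(I)|\) rows of \(I\) are unmatched. Since \(M\) covers at most \(n-|I|+|N_\Omega(I)|\) rows in total, we get \(\nu(\Omega)\le n-(|I|-|N_\Omega(I)|)\). Taking the maximum over \(I\) gives \(n-\nu(\Omega)\ge\max_I(|I|-|N_\Omega(I)|)\). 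Note that \(I=\emptyset\) shows the maximum is at least \(0\), which matches \(\nu(\Omega)\le n\).

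For the reverse inequality I will use König's theorem. Let \(C=C_R\cup C_C\) be a minimum vertex cover, with rows \(C_R\) and columns \(C_C\), so that \(|C|=\nu(\Omega)\). Set \(I:=[n]\setminus C_R\). Every edge leaving a row in \(I\) must be covered by its column endpoint, so \(N_\Omega(I)\subseteq C_C\). Hence
\[
|I|-|N_\Omega(I)|\ge (n-|C_R|)-|C_C| = n-\nu(\Omega),
\]
which proves the identity. As an alternative, one could apply Hall's theorem to \(G_\Omega\) augmented by \(\delta\) dummy columns joined to every row; I will keep König's version because it is the shortest route. This is the only step with real content, and its difficulty is just in choosing the right \(I\), namely the complement of the row part of the cover.

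For the consequence, note that all quantities are integers. By the identity, \(\nu(\Omega)<n-r\) is equivalent to \(n-\nu(\Omega)\ge r+1\), which holds if and only if some \(I\) satisfies \(|I|-|N_\Omega(I)|\ge r+1\), that is, \(|N_\Omega(I)|\le|I|-r-1\). That is exactly \cref{eq:hall_bad_set}.
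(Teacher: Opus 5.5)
Your proof is correct, and it takes a somewhat different route from the paper: it proves the identity that the paper only cites. The paper treats \eqref{eq:hall_deficiency} as the standard deficiency form of Hall's theorem, restated as ``a matching of size at least $n-r$ exists iff every $I$ has $|I|-|N_\Omega(I)|\le r$.'' It gives no argument for this and only spells out the translation into the bad-set condition \eqref{eq:hall_bad_set}. That translation is the same integrality step as in your last paragraph.

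You instead derive the identity from K\"onig's theorem. You get $n-\nu(\Omega)\ge\max_I(|I|-|N_\Omega(I)|)$ by directly counting unmatched rows. For the reverse inequality, you take $I$ to be the complement of the row part of a minimum vertex cover, which forces $N_\Omega(I)\subseteq C_C$.

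The paper's version is shorter but rests on a black box, which is itself usually proved either by your dummy-column augmentation of Hall's theorem or via K\"onig. Yours is self-contained modulo K\"onig. It also shows where the real content lies. The only direction used later, in \cref{lem:weighted_hall_bound}, is that $\nu(\Omega)<n-r$ forces a set $I$ with no edges into $N_\Omega(I)^c$. That is exactly your K\"onig direction, while the other direction is elementary counting.
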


\begin{proof}
This is the standard deficiency form of Hall's theorem. A matching of size at
least \(n-r\) exists if and only if every row set \(I\subseteq[n]\) has
deficiency at most \(r\), namely
\[
    |I|-|N_\Omega(I)|\le r.
\]
Equivalently,
\[
    |N_\Omega(I)|\ge |I|-r
    \qquad
    \textnormal{for all } I\subseteq[n].
\]
Thus, the condition fails if and only if there exists \(I\subseteq[n]\) such
that
\[
    |N_\Omega(I)|\le |I|-r-1.
\]
\end{proof}

\begin{lemma}[Weighted Hall failure bound]
\label{lem:weighted_hall_bound}
Assume that the edge indicators \(\Omega_{ij}\) are independent Bernoulli
random variables with probabilities \(q_{ij}\). Let \(R:=r+1\). For
\(I,U\subseteq[n]\), define
\[
    \Lambda(I,U):=\sum_{i\in I,\ j\in U}q_{ij}.
\]
Then
\begin{equation}
    \mathbb{P}[\nu(\Omega)<n-r]
    \le
    \sum_{\substack{I,U\subseteq[n]\\ |I|\ge R,\ |U|\ge R,\ |I|+|U|\ge n+R}}
    \exp(-\Lambda(I,U)).
    \label{eq:weighted_hall_failure_bound}
\end{equation}
\end{lemma}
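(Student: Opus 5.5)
By \cref{lem:hall_deficiency}, the event \(\{\nu(\Omega)<n-r\}\) occurs exactly when some row set \(I\subseteq[n]\) satisfies \(|N_\Omega(I)|\le |I|-r-1=|I|-R\). The plan is to turn each such bad set \(I\) into an \emph{empty rectangle} \(I\times U\) in \(\Omega\), take a union bound over these rectangles, and bound the probability that a fixed rectangle is empty by \(e^{-\Lambda(I,U)}\).

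\textbf{Step 1: from a bad set to an empty rectangle.} Given a bad \(I\), set \(U:=[n]\setminus N_\Omega(I)\). By definition of the neighborhood, no edge of \(\Omega\) joins \(I\) to \(U\), so \(\Omega_{ij}=0\) for all \((i,j)\in I\times U\). We then check the three size constraints:
\begin{itemize}
\item \(|U|=n-|N_\Omega(I)|\ge n-|I|+R\), which gives \(|I|+|U|\ge n+R\);
\item since \(|N_\Omega(I)|\ge 0\), we get \(|I|\ge R\);
\item since \(|I|\le n\), we get \(|U|\ge R\).
\end{itemize}
Hence
\[
\{\nu(\Omega)<n-r\}\subseteq\bigcup_{(I,U)}\{\Omega_{ij}=0\ \ \forall (i,j)\in I\times U\},
\]
where the union runs over exactly the index set appearing in \cref{eq:weighted_hall_failure_bound}.

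\textbf{Step 2: union bound and independence.} For a fixed pair \((I,U)\), independence of the edge indicators gives
\[
\mathbb{P}[\text{rectangle empty}]=\prod_{i\in I,\,j\in U}(1-q_{ij})\le \prod_{i\in I,\,j\in U} e^{-q_{ij}}=e^{-\Lambda(I,U)},
\]
using \(1-x\le e^{-x}\). Summing this bound over all admissible pairs \((I,U)\) yields \cref{eq:weighted_hall_failure_bound}.

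The only delicate point is Step 1: verifying that the complement of the neighborhood gives a rectangle meeting all three size constraints. In particular, \(|U|\ge R\) follows only from \(|I|\le n\). Once that is in place, the rest is the standard union-bound argument. The bound may overcount, since a single bad \(I\) can produce rectangles counted more than once, but overcounting does not matter for an upper bound.
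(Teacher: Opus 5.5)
Your proposal is correct and is essentially the paper's own proof: the deficiency form of Hall's theorem turns a bad row set $I$ into the empty rectangle $I\times([n]\setminus N_\Omega(I))$, which meets the three size constraints; its emptiness probability is then bounded by $\prod_{i\in I,\,j\in U}(1-q_{ij})\le e^{-\Lambda(I,U)}$, followed by a union bound over admissible pairs. A minor aside: each bad $I$ yields exactly one rectangle, so the slack in the bound comes from overlaps between the emptiness events of distinct admissible pairs, not from one $I$ being counted twice, and this is harmless either way.
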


\begin{proof}
By \cref{lem:hall_deficiency}, the event \(\nu(\Omega)<n-r\) occurs if and
only if there exists \(I\subseteq[n]\) such that
\[
    |N_\Omega(I)|\le |I|-r-1.
\]
Let \(J:=N_\Omega(I)\) and \(U:=J^c\). Then no edges exist from \(I\) to \(U\).
Moreover,
\[
    |U|
    =
    n-|J|
    \ge
    n-|I|+r+1.
\]
Equivalently,
\[
    |I|+|U|\ge n+r+1=n+R.
\]
Also, such a violation can occur only when \(|I|\ge R\), and the previous
display implies \(|U|\ge R\). Thus, Hall failure implies the existence of sets
\(I,U\subseteq[n]\) satisfying
\[
    |I|\ge R,
    \qquad
    |U|\ge R,
    \qquad
    |I|+|U|\ge n+R,
\]
with no edges in the rectangle \(I\times U\).

For fixed \(I,U\), the probability that there are no edges in \(I\times U\)
is
\[
    \prod_{i\in I,\ j\in U}(1-q_{ij})
    \le
    \exp\left(-\sum_{i\in I,\ j\in U}q_{ij}\right)
    =
    \exp(-\Lambda(I,U)).
\]
Taking a union bound over all admissible \(I,U\) gives the result.
\end{proof}

\subsection{Proof of the deterministic rank proposition}

\begin{proof}[Proof of \cref{prop:matching_plus_lowrank}]
By \cref{lem:structural_rank}, the sparse matrix \(S_\Omega\) satisfies
\[
    \operatorname{rank}(S_\Omega)=\nu(\Omega)
\]
almost surely under the generic-values assumption. Applying
\cref{lem:generic_lowrank_completion} with \(S=S_\Omega\), we get
\begin{align}
    \operatorname{rank}(S_\Omega+BA)
    &=
    \min\{n,\operatorname{rank}(S_\Omega)+r\}
    \notag\\
    &=
    \min\{n,\nu(\Omega)+r\}
\end{align}
almost surely. In particular, if
\[
    \nu(\Omega)\ge n-r,
\]
then
\[
    \operatorname{rank}(S_\Omega+BA)=n
\]
almost surely.
\end{proof}

\subsection{Proof of the main full-rank theorem}

\begin{proof}[Proof of \cref{thm:angular_full_rank}]
Condition on \(Q,K\). Then the angular collision probabilities \(q_{ij}\) are
fixed numbers in \([0,1]\). By \cref{lem:weighted_hall_bound},
\[
    \mathbb{P}[\nu(\Omega)<n-r\mid Q,K]
    \le
    \Delta_r(Q,K).
\]
Therefore, with probability at least \(1-\Delta_r(Q,K)\), we have
\[
    \nu(\Omega)\ge n-r.
\]
On this event, \cref{prop:matching_plus_lowrank} implies
\[
    \operatorname{rank}(S_\Omega+BA)=n
\]
almost surely over the generic low-rank factors \(A,B\). Hence
\[
    \mathbb{P}\!\left[
    \operatorname{rank}(S_\Omega+BA)=n
    \,\middle|\, Q,K
    \right]
    \ge
    1-\Delta_r(Q,K).
\]
\end{proof}

\section{Interpretable High-Probability Corollaries}
\label{app:rank_corollaries}

The main theorem gives the full-rank probability through the quantity
\(\Delta_r(Q,K)\). In this section, we give several sufficient conditions under
which \(\Delta_r(Q,K)\le n^{-c}\), so that
\[
    \mathbb{P}\!\left[
    \operatorname{rank}(S_\Omega+BA)=n
    \,\middle|\, Q,K
    \right]
    \ge
    1-n^{-c}.
\]
These corollaries are not meant to exhaust all possible conditions. Rather,
they show different ways to interpret the rank guarantee: through uniform
cut-density, through a conservative worst-case angular lower bound, and through
an idealized isotropic mean model.

\subsection{A combinatorial rectangle bound}

\begin{lemma}[A binomial rectangle bound]
\label{lem:binomial_rectangle_bound}
Let \(R\in[n]\). Suppose \(a,h\in\{R,\ldots,n\}\) satisfy
\[
    a+h\ge n+R.
\]
Then
\begin{equation}
    \log\binom{n}{a}
    +
    \log\binom{n}{h}
    \le
    8\,\frac{ah}{n}\log\frac{en}{R}.
    \label{eq:binomial_rectangle_bound}
\end{equation}
\end{lemma}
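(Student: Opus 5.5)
The approach is to pass to complementary sizes and use the standard estimate $\log\binom{n}{k}\le k\log\frac{en}{k}$, together with the fact that $f(x):=x\log\frac{en}{x}$ is nondecreasing on $(0,n]$. Monotonicity holds because $f'(x)=\log\frac{n}{x}\ge 0$ there; we set $f(0):=0$ for the degenerate case. Both sides of \cref{eq:binomial_rectangle_bound} are symmetric in $a$ and $h$, so we assume without loss of generality that $a\ge h$.

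\emph{Step 1: the larger side is at least $n/2$.} From $a+h\ge n+R$ and $a\ge h$ we get $a\ge (n+R)/2\ge n/2$. Hence $\frac{ah}{n}\ge \frac{h}{2}$, i.e.\ $h\le 2\,\frac{ah}{n}$. It therefore suffices to show that the left-hand side is at most $2h\log\frac{en}{R}$. This gives the bound with constant $4$, which is stronger than the stated constant $8$.

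\emph{Step 2: bound $\log\binom{n}{h}$.} We use $\log\binom{n}{h}\le h\log\frac{en}{h}=f(h)$. Since $R\le h$, we have $\log\frac{en}{h}\le\log\frac{en}{R}$, and so $\log\binom{n}{h}\le h\log\frac{en}{R}$.

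\emph{Step 3: bound $\log\binom{n}{a}$ through its complement.} We write $\binom{n}{a}=\binom{n}{a'}$ with $a':=n-a$. The hypothesis $a+h\ge n+R$ gives $a'\le h-R<h\le n$.
\begin{itemize}
\item If $a'=0$, the term vanishes.
\item Otherwise, $\log\binom{n}{a'}\le f(a')$. By monotonicity of $f$ this is at most $f(h)=h\log\frac{en}{h}\le h\log\frac{en}{R}$.
\end{itemize}

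Adding Steps 2 and 3 gives $\log\binom{n}{a}+\log\binom{n}{h}\le 2h\log\frac{en}{R}$. By Step 1 this is at most $4\,\frac{ah}{n}\log\frac{en}{R}$, which is at most the right-hand side of \cref{eq:binomial_rectangle_bound}, since $\log\frac{en}{R}\ge 1>0$.

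The only step needing care is Step 3. The direct bound $a\log\frac{en}{a}$ does not give the required factor $h$ when $a$ is close to $n$. Passing to the complement $n-a\le h-R$ is what turns that term into one controlled by $h$. The remaining ingredients are routine.
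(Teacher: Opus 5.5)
Your proof is correct, and it gives the sharper constant $4$. It uses the same ingredients as the paper: bound the smaller side by $k\log\frac{en}{k}$, pass to the complement of the larger side (of size at most $h-R$ in your labeling), apply monotonicity of $x\log\frac{en}{x}$, and convert using $\frac{ah}{n}\ge\frac12\min\{a,h\}$.

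The difference is in how the cases are organized. The paper applies the bound in the form $\binom{n}{t}\le (en/\min\{t,n-t\})^{\min\{t,n-t\}}$, so it splits on whether some side is at most $n/2$. Its first two cases are exactly your argument, with constant $4$. This leaves a third case, $a,h>n/2$, which the paper handles with the crude estimate $\binom{n}{a}\binom{n}{h}\le 4^n$ against $\frac{ah}{n}>\frac n4$; that case alone forces the constant $8$.

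You instead order the sides as $a\ge h$. You use that $\binom{n}{h}\le (en/h)^h$ holds for every $1\le h\le n$, with no restriction $h\le n/2$. You also get $a\ge n/2$ automatically from $a+h\ge n+R$. A single argument therefore covers every admissible $(a,h)$, including the paper's third case.

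Your route is shorter and tighter; the paper's case split gains nothing. A side benefit of your constant: the $8\log\frac{en}{R}$ term in $p_{r,c}$ in the deficiency-aware density corollary could be replaced by $4\log\frac{en}{R}$.
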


\begin{proof}
We use the standard bound
\[
    \binom{n}{t}
    =
    \binom{n}{n-t}
    \le
    \left(\frac{en}{\min\{t,n-t\}}\right)^{\min\{t,n-t\}},
\]
with the convention that the corresponding term is zero when
\(\min\{t,n-t\}=0\).

First suppose \(a\le n/2\). Since \(a+h\ge n+R\), we have
\[
    n-h\le a-R\le a.
\]
Therefore,
\begin{align}
    \log\binom{n}{a}
    +
    \log\binom{n}{h}
    &=
    \log\binom{n}{a}
    +
    \log\binom{n}{n-h}
    \notag\\
    &\le
    a\log\frac{en}{a}
    +
    (n-h)\log\frac{en}{n-h}
    \notag\\
    &\le
    2a\log\frac{en}{a}
    \notag\\
    &\le
    2a\log\frac{en}{R}.
\end{align}
Also \(h\ge n/2\), so \(ah/n\ge a/2\). Hence
\[
    \log\binom{n}{a}
    +
    \log\binom{n}{h}
    \le
    4\,\frac{ah}{n}\log\frac{en}{R}.
\]
The case \(h\le n/2\) is symmetric.

It remains to consider the case \(a>n/2\) and \(h>n/2\). Then
\[
    \binom{n}{a}\binom{n}{h}\le 2^{2n},
\]
so
\[
    \log\binom{n}{a}
    +
    \log\binom{n}{h}
    \le
    2n\log 2.
\]
Since \(a>n/2\) and \(h>n/2\), we have \(ah/n>n/4\). Also
\(\log(en/R)\ge 1\). Therefore
\[
    8\,\frac{ah}{n}\log\frac{en}{R}
    \ge
    2n
    \ge
    2n\log 2.
\]
Combining the cases proves the claim.
\end{proof}

\subsection{Deficiency-aware collision density}

\begin{corollary}[Deficiency-aware collision density]
\label{cor:deficiency_aware_density}
Fix \(c>0\), and suppose \(0\le r<n\). Let
\[
    R:=r+1,
\]
and define
\begin{equation}
    p_{r,c}
    :=
    \frac{1}{n}
    \left(
    8\log\frac{en}{R}
    +
    \frac{(c+2)\log n}{R}
    \right).
    \label{eq:deficiency_aware_threshold}
\end{equation}
Suppose that for every pair of sets \(I,U\subseteq[n]\) satisfying
\[
    |I|\ge R,
    \qquad
    |U|\ge R,
    \qquad
    |I|+|U|\ge n+R,
\]
we have the cut-density lower bound
\begin{equation}
    \Lambda(I,U)
    =
    \sum_{i\in I,\ j\in U}q_{ij}
    \ge
    p_{r,c}|I||U|.
    \label{eq:deficiency_cut_density}
\end{equation}
Then, under the assumptions of \cref{thm:angular_full_rank},
\begin{equation}
    \mathbb{P}\!\left[
    \operatorname{rank}(S_\Omega+BA)=n
    \,\middle|\, Q,K
    \right]
    \ge
    1-n^{-c}.
    \label{eq:deficiency_density_rank_bound}
\end{equation}
\end{corollary}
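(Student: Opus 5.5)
By \cref{thm:angular_full_rank} it is enough to show \(\Delta_r(Q,K)\le n^{-c}\). The plan is to sum the terms of \(\Delta_r\) by the sizes \(a=|I|\) and \(h=|U|\), and to pay for the number of rectangles of each size using \cref{lem:binomial_rectangle_bound}. For fixed \((a,h)\) with \(a,h\ge R\) and \(a+h\ge n+R\), the cut-density hypothesis \eqref{eq:deficiency_cut_density} gives \(e^{-\Lambda(I,U)}\le e^{-p_{r,c}ah}\) for every admissible pair. Hence
\[
\Delta_r(Q,K)\le \sum_{a,h}\binom{n}{a}\binom{n}{h}\exp(-p_{r,c}ah).
\]

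Next I substitute the definition \eqref{eq:deficiency_aware_threshold}. This splits the exponent into \(8\frac{ah}{n}\log\frac{en}{R}\) and \(\frac{(c+2)\log n}{R}\cdot\frac{ah}{n}\). \Cref{lem:binomial_rectangle_bound} says the first part is at least \(\log\binom{n}{a}+\log\binom{n}{h}\), so the binomial factors are absorbed. Each summand is then at most
\[
\exp\!\Big(-\frac{(c+2)\log n}{R}\cdot\frac{ah}{n}\Big).
\]

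The key remaining estimate is \(ah/n\ge R\) on the admissible region. Write \(h\ge n+R-a\). Since \(a\le n\) and \(a\ge R\), the concave function \(a(n+R-a)\) on \([R,n]\) attains its minimum at an endpoint, and both endpoints give the value \(nR\). So \(ah\ge nR\), and each term is at most \(n^{-(c+2)}\).

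Finally, there are at most \(n^2\) pairs \((a,h)\), so \(\Delta_r(Q,K)\le n^2\cdot n^{-(c+2)}=n^{-c}\), which proves the corollary. The only delicate point is choosing the right split of \(p_{r,c}\) so that the binomial lemma and the \(ah\ge nR\) bound line up. The \(\log(en/R)\) factor is what makes the combinatorial entropy cancel. I also need to check the edge case \(R>n/2\), but the lemma already covers it through its condition \(a+h\ge n+R\).
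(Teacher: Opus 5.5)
Your proposal is correct and follows the same route as the paper's proof. You group the terms of $\Delta_r(Q,K)$ by $(|I|,|U|)=(a,h)$, absorb $\binom{n}{a}\binom{n}{h}$ with \cref{lem:binomial_rectangle_bound} using the $8\log(en/R)$ part of $p_{r,c}$, bound each term by $n^{-(c+2)}$ via $ah\ge nR$, and sum over at most $n^2$ size pairs. Your concavity argument for $ah\ge nR$ (minimizing $a(n+R-a)$ over $a\in[R,n]$) just makes explicit a step the paper only asserts.
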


\begin{proof}
Let
\[
    R:=r+1.
\]
Recall from \cref{thm:angular_full_rank} that the failure probability is
bounded by
\[
    \Delta_r(Q,K)
    =
    \sum_{\substack{I,U\subseteq[n]\\ |I|\ge R,\ |U|\ge R,\ |I|+|U|\ge n+R}}
    \exp(-\Lambda(I,U)).
\]
For fixed cardinalities
\[
    a:=|I|,
    \qquad
    h:=|U|,
\]
the cut-density assumption gives
\[
    \Lambda(I,U)
    \ge
    p_{r,c}ah.
\]
Therefore,
\begin{align}
    \Delta_r(Q,K)
    &\le
    \sum_{\substack{a,h\in\{R,\ldots,n\}\\ a+h\ge n+R}}
    \binom{n}{a}\binom{n}{h}
    \exp(-p_{r,c}ah).
    \label{eq:delta_grouped_by_rectangles}
\end{align}
By the definition of \(p_{r,c}\),
\[
    p_{r,c}ah
    =
    \frac{ah}{n}
    \left(
    8\log\frac{en}{R}
    +
    \frac{(c+2)\log n}{R}
    \right).
\]
Using \cref{lem:binomial_rectangle_bound},
\[
    \log\binom{n}{a}
    +
    \log\binom{n}{h}
    \le
    8\,\frac{ah}{n}\log\frac{en}{R}.
\]
Hence, for each admissible cardinality pair \((a,h)\),
\begin{align}
    \binom{n}{a}\binom{n}{h}
    \exp(-p_{r,c}ah)
    &\le
    \exp\left(
    -
    \frac{ah}{nR}(c+2)\log n
    \right).
\end{align}
Since \(a,h\ge R\) and \(a+h\ge n+R\), the product \(ah\) is minimized at
\((a,h)=(R,n)\) or \((n,R)\). Thus
\[
    ah\ge nR.
\]
Therefore each admissible cardinality pair contributes at most
\[
    n^{-(c+2)}.
\]
There are at most \(n^2\) admissible pairs \((a,h)\). Consequently,
\[
    \Delta_r(Q,K)
    \le
    n^2\cdot n^{-(c+2)}
    =
    n^{-c}.
\]
Applying \cref{thm:angular_full_rank} completes the proof.
\end{proof}

The threshold \(p_{r,c}\) is deficiency-aware. When \(r=0\), the sparse graph
itself must contain a perfect matching, and the sufficient sparse density has
the familiar \(O(\log n/n)\) scaling. When \(r>0\), the sparse graph is allowed
to have matching deficiency up to \(r\), and the rank-\(r\) low-rank branch
fills the missing directions.

\subsection{Conservative worst-case angular condition}

\begin{corollary}[Conservative worst-case angular condition]
\label{cor:worst_case_angular_condition}
Fix \(c>0\), suppose \(0\le r<n\), and let \(p_{r,c}\) be defined as in
\cref{eq:deficiency_aware_threshold}. Let
\[
    \rho_{\min}:=\min_{i,j}\rho_{ij}.
\]
If
\begin{equation}
    1-\exp(-L_{\mathrm{s}}\rho_{\min}^{\gamma})
    \ge
    p_{r,c},
    \label{eq:worst_case_angular_condition}
\end{equation}
then, under the assumptions of \cref{thm:angular_full_rank},
\begin{equation}
    \mathbb{P}\!\left[
    \operatorname{rank}(S_\Omega+BA)=n
    \,\middle|\, Q,K
    \right]
    \ge
    1-n^{-c}.
\end{equation}
Equivalently, when \(p_{r,c}<1\) and \(\rho_{\min}>0\), it is sufficient that
\begin{equation}
    L_{\mathrm{s}}
    \ge
    \frac{-\log(1-p_{r,c})}{\rho_{\min}^{\gamma}}.
    \label{eq:sufficient_number_sparse_trials}
\end{equation}
\end{corollary}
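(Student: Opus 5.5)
The plan is to reduce to \cref{cor:deficiency_aware_density}. That means showing that the worst-case angular condition \cref{eq:worst_case_angular_condition} forces the cut-density lower bound \cref{eq:deficiency_cut_density} on every Hall-relevant rectangle. Once the cut-density bound holds, the corollary gives $\mathbb{P}[\operatorname{rank}(S_\Omega+BA)=n\mid Q,K]\ge 1-n^{-c}$ directly.

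The first step is an entrywise lower bound on $q_{ij}$. The function $t\mapsto t^\gamma$ is monotone on $[0,1]$ for $\gamma>0$, so $\pi_{ij}=\rho_{ij}^\gamma\ge\rho_{\min}^\gamma$. Since $\pi_{ij}\in[0,1]$, the elementary inequality $1-x\le e^{-x}$ gives
\[
q_{ij}=1-(1-\pi_{ij})^{L_{\mathrm{s}}}\ge 1-\exp(-L_{\mathrm{s}}\pi_{ij})\ge 1-\exp(-L_{\mathrm{s}}\rho_{\min}^{\gamma}).
\]
The last step uses that $x\mapsto 1-e^{-L_{\mathrm{s}}x}$ is increasing. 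By \cref{eq:worst_case_angular_condition}, this yields $q_{ij}\ge p_{r,c}$ for every pair $(i,j)$.

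The second step sums this bound over the rectangle. For any $I,U\subseteq[n]$, and in particular the admissible ones with $|I|,|U|\ge R$ and $|I|+|U|\ge n+R$,
\[
\Lambda(I,U)=\sum_{i\in I,\,j\in U}q_{ij}\ge p_{r,c}|I||U|.
\]
This is exactly \cref{eq:deficiency_cut_density}, so invoking \cref{cor:deficiency_aware_density} under the assumptions of \cref{thm:angular_full_rank} finishes the main claim.

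For the equivalent sufficient condition on $L_{\mathrm{s}}$, assume $p_{r,c}<1$ and $\rho_{\min}>0$. Then the condition $1-\exp(-L_{\mathrm{s}}\rho_{\min}^\gamma)\ge p_{r,c}$ rearranges to $\exp(-L_{\mathrm{s}}\rho_{\min}^\gamma)\le 1-p_{r,c}$. Taking logarithms gives $L_{\mathrm{s}}\rho_{\min}^\gamma\ge-\log(1-p_{r,c})$. Dividing by $\rho_{\min}^\gamma>0$ gives \cref{eq:sufficient_number_sparse_trials}, and each step is reversible.

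There is no real obstacle here. The only point needing care is the direction of the inequality $(1-\pi)^{L}\le e^{-L\pi}$ together with the monotonicity used to pass from $\pi_{ij}$ to $\rho_{\min}^\gamma$. One should also note that if $\rho_{\min}=0$ the hypothesis can hold only when $p_{r,c}\le 0$, which is impossible. The statement is then vacuous, so it is consistent to restrict the equivalent form to $\rho_{\min}>0$.
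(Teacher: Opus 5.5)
Your proposal is correct and follows the paper's proof essentially line for line. You bound each $q_{ij}$ below by $1-\exp(-L_{\mathrm{s}}\rho_{\min}^{\gamma})\ge p_{r,c}$, sum over every Hall-relevant rectangle to obtain the cut-density condition of \cref{cor:deficiency_aware_density}, and rearrange to get the bound on $L_{\mathrm{s}}$. Along the way you also supply the justification the paper leaves implicit, namely $(1-\pi)^{L}\le e^{-L\pi}$ together with the monotonicity of $t\mapsto t^{\gamma}$.
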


\begin{proof}
 For every pair \((i,j)\),
\[
    q_{ij}
    \ge
    1-\exp(-L_{\mathrm{s}}\rho_{ij}^{\gamma}).
\]
By definition of \(\rho_{\min}\),
\[
    \rho_{ij}\ge \rho_{\min}
    \qquad
    \textnormal{for all } i,j.
\]
Therefore,
\[
    q_{ij}
    \ge
    1-\exp(-L_{\mathrm{s}}\rho_{\min}^{\gamma}).
\]
If
\[
    1-\exp(-L_{\mathrm{s}}\rho_{\min}^{\gamma})
    \ge
    p_{r,c},
\]
then \(q_{ij}\ge p_{r,c}\) for all \(i,j\). Hence, for every admissible
rectangle \(I\times U\),
\[
    \Lambda(I,U)
    =
    \sum_{i\in I,\ j\in U}q_{ij}
    \ge
    p_{r,c}|I||U|.
\]
Thus the cut-density condition in \cref{cor:deficiency_aware_density} holds,
and the desired high-probability full-rank conclusion follows.

Finally, when \(p_{r,c}<1\) and \(\rho_{\min}>0\), the inequality
\[
    1-\exp(-L_{\mathrm{s}}\rho_{\min}^{\gamma})
    \ge
    p_{r,c}
\]
is equivalent to
\[
    \exp(-L_{\mathrm{s}}\rho_{\min}^{\gamma})
    \le
    1-p_{r,c},
\]
which holds whenever
\[
    L_{\mathrm{s}}
    \ge
    \frac{-\log(1-p_{r,c})}{\rho_{\min}^{\gamma}}.
\]
\end{proof}

\begin{remark}[Worst-case angular conditions are pessimistic]
\label{rem:worst_case_angular_pessimistic}
\Cref{cor:worst_case_angular_condition} uses the minimum angular similarity
over all \(n^2\) query-key pairs and is therefore conservative. In practice,
full rank does not require every pair to have large collision probability; it
only requires that no Hall-relevant rectangle has too little total collision
mass. Thus, some individual pairs may have very small collision probability,
as long as no large query set is separated from too many key vertices.
\end{remark}

\subsection{Isotropic mean Hoeffding certificate}

\begin{corollary}[Isotropic mean Hoeffding certificate]
\label{cor:isotropic_mean_hoeffding_certificate}
Fix \(c>0\), \(\delta\in(0,1)\), and suppose \(0\le r<n\). Let
\[
    R:=r+1 .
\]
For \(a,h\in\{R,\ldots,n\}\) satisfying \(a+h\ge n+R\), define
\begin{equation}
    \tau_{a,h,c}
    :=
    \frac{
    \log\binom{n}{a}
    +
    \log\binom{n}{h}
    +
    (c+2)\log n
    }{ah},
    \label{eq:size_dependent_hall_threshold}
\end{equation}
and
\begin{equation}
    \beta_{a,h,\delta}
    :=
    \sqrt{
    \frac{
    \log\binom{n}{a}
    +
    \log\binom{n}{h}
    +
    \log(n^2/\delta)
    }{2ah}
    } .
    \label{eq:hoeffding_rectangle_deviation}
\end{equation}
Let
\begin{equation}
    \Theta_{r,c,\delta}
    :=
    \max_{\substack{a,h\in\{R,\ldots,n\}\\ a+h\ge n+R}}
    \left(
    \tau_{a,h,c}
    +
    \beta_{a,h,\delta}
    \right).
    \label{eq:isotropic_mean_threshold}
\end{equation}

Assume an idealized isotropic collision-probability model in which the
entries \(q_{ij}\in[0,1]\) are independent random variables with common mean
\(\mu_q\). Moreover, assume that the isotropic angular estimate is a
conservative lower approximation to this mean:
\begin{equation}
    \mu_q
    \ge
    \bar q_{\mathrm{iso}}-\varepsilon_{\mathrm{iso}},
    \qquad
    \bar q_{\mathrm{iso}}
    :=
    1-\left(1-2^{-\gamma}\right)^{L_{\mathrm{s}}},
    \label{eq:isotropic_collision_mean}
\end{equation}
where \(\varepsilon_{\mathrm{iso}}\ge0\) measures the approximation slack.
In the sparse-collision regime \(L_{\mathrm{s}}2^{-\gamma}\ll1\),
\[
    \bar q_{\mathrm{iso}}
    =
    L_{\mathrm{s}}2^{-\gamma}
    +
    O\!\left(L_{\mathrm{s}}^2 2^{-2\gamma}\right).
\]

If
\begin{equation}
    \bar q_{\mathrm{iso}}-\varepsilon_{\mathrm{iso}}
    \ge
    \Theta_{r,c,\delta},
    \label{eq:isotropic_mean_condition}
\end{equation}
then, with probability at least \(1-\delta\) over the draw of the probability
matrix \((q_{ij})\), all Hall-relevant rectangles satisfy
\begin{equation}
    \frac{1}{|I||U|}
    \sum_{i\in I,\ j\in U} q_{ij}
    \ge
    \tau_{|I|,|U|,c}.
    \label{eq:all_rectangle_means_large}
\end{equation}
Consequently, under \cref{as:independent_collision_model,as:generic_values_lowrank},
with probability at least \(1-\delta\) over the draw of \((q_{ij})\),
\begin{equation}
    \mathbb{P}\!\left[
    \operatorname{rank}(S_\Omega+BA)=n
    \,\middle|\, (q_{ij})_{i,j}
    \right]
    \ge
    1-n^{-c}.
    \label{eq:conditional_full_rank_from_isotropic_mean}
\end{equation}
Equivalently, over the joint randomness of the collision probabilities,
the sparse support, and the low-rank factors,
\begin{equation}
    \mathbb{P}\!\left[
    \operatorname{rank}(S_\Omega+BA)=n
    \right]
    \ge
    1-\delta-n^{-c}.
    \label{eq:joint_full_rank_from_isotropic_mean}
\end{equation}
\end{corollary}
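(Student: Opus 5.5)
The argument has two stages. First, a Hoeffding-plus-union-bound step over the random matrix \((q_{ij})\) shows that every Hall-relevant rectangle has empirical mean at least \(\tau_{|I|,|U|,c}\) with probability at least \(1-\delta\). Second, on that event we reuse the grouping argument from the proof of \cref{cor:deficiency_aware_density}, with \(\tau_{a,h,c}\) in place of \(p_{r,c}\), and invoke \cref{thm:angular_full_rank} conditionally on \((q_{ij})\).

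\textbf{Step 1 (rectangle concentration).} Fix \(I,U\) with \(|I|=a\), \(|U|=h\), \(a,h\ge R\), and \(a+h\ge n+R\). The \(ah\) variables \(q_{ij}\), \((i,j)\in I\times U\), are independent, lie in \([0,1]\), and have mean \(\mu_q\). Hoeffding's inequality gives
\[
\mathbb{P}\Big[\tfrac{1}{ah}\textstyle\sum_{I\times U}q_{ij}<\mu_q-\beta\Big]\le e^{-2ah\beta^2}.
\]
With \(\beta=\beta_{a,h,\delta}\) the right-hand side equals
\[
\Big[\tbinom{n}{a}\tbinom{n}{h}\Big]^{-1}\frac{\delta}{n^2}.
\]
For a fixed pair \((a,h)\) there are at most \(\binom{n}{a}\binom{n}{h}\) rectangles, and there are at most \(n^2\) admissible pairs, so a union bound makes the total failure probability at most \(\delta\). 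On the complementary event, every admissible rectangle satisfies
\[
\frac{1}{ah}\sum q_{ij}\ge \mu_q-\beta_{a,h,\delta}\ge \bar q_{\mathrm{iso}}-\varepsilon_{\mathrm{iso}}-\beta_{a,h,\delta}\ge \Theta_{r,c,\delta}-\beta_{a,h,\delta}\ge\tau_{a,h,c}.
\]
The last inequality holds because \(\Theta_{r,c,\delta}\) is the maximum of \(\tau+\beta\) over admissible pairs. This is exactly \cref{eq:all_rectangle_means_large}.

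\textbf{Step 2 (conditional full rank).} Condition on \((q_{ij})\) in the good event. \cref{as:independent_collision_model} then makes the \(\Omega_{ij}\) independent Bernoulli variables with these probabilities, so the proof of \cref{thm:angular_full_rank} (via \cref{lem:weighted_hall_bound} and \cref{prop:matching_plus_lowrank}) applies verbatim, with conditioning on \((q_{ij})\) in place of \(Q,K\). It gives failure probability at most \(\Delta_r\). Group the terms of \(\Delta_r\) by cardinalities \((a,h)\) and use \(\Lambda(I,U)\ge ah\,\tau_{a,h,c}\). Each group then contributes at most
\[
\tbinom{n}{a}\tbinom{n}{h}\,e^{-ah\tau_{a,h,c}}=n^{-(c+2)}.
\]
This needs no \cref{lem:binomial_rectangle_bound}, since the binomial factors are built into \(\tau\). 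Summing over at most \(n^2\) pairs gives \(\Delta_r\le n^{-c}\), which is \cref{eq:conditional_full_rank_from_isotropic_mean}.

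\textbf{Step 3 (joint bound).} Average the conditional probability over \((q_{ij})\): the joint failure probability is at most \(\mathbb{P}[\text{bad }q]+\sup_{\text{good }q}\Delta_r\le\delta+n^{-c}\), which is \cref{eq:joint_full_rank_from_isotropic_mean}.

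The main obstacle is in Step 2: the argument relies on the edges being independent given the probability matrix itself, rather than given \(Q,K\). I would state this conditioning explicitly as the modeling interpretation of \cref{as:independent_collision_model}, and treat the generic-values condition of \cref{as:generic_values_lowrank} as holding independently of \((q_{ij})\). The asymptotic expansion of \(\bar q_{\mathrm{iso}}\) is just the binomial expansion \(1-(1-x)^{L_{\mathrm{s}}}=L_{\mathrm{s}}x+O(L_{\mathrm{s}}^2x^2)\) with \(x=2^{-\gamma}\).
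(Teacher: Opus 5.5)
Your proposal is correct and matches the paper's proof step for step. Both use Hoeffding with $\beta_{a,h,\delta}$ and a two-level union bound (over the $\binom{n}{a}\binom{n}{h}$ rectangles, then over at most $n^2$ cardinality pairs), then the grouped bound $\binom{n}{a}\binom{n}{h}e^{-ah\tau_{a,h,c}}=n^{-(c+2)}$ giving $\Delta_r\le n^{-c}$ without \cref{lem:binomial_rectangle_bound}, and a final union bound. The paper likewise applies \cref{thm:angular_full_rank} conditionally on the realized matrix $(q_{ij})$, so the conditioning you flag is exactly the interpretation the paper uses implicitly.
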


\begin{proof}
Fix admissible cardinalities \(a,h\in\{R,\ldots,n\}\) satisfying
\[
    a+h\ge n+R.
\]
For fixed sets \(I,U\subseteq[n]\) with \(|I|=a\) and \(|U|=h\), define
\[
    \overline q(I,U)
    :=
    \frac{1}{ah}
    \sum_{i\in I,\ j\in U}q_{ij}.
\]
Under the idealized isotropic collision-probability model, the entries
\(q_{ij}\in[0,1]\) are independent with common mean \(\mu_q\). Hence Hoeffding's
inequality gives
\[
    \mathbb{P}\left[
    \overline q(I,U)
    <
    \mu_q-\beta_{a,h,\delta}
    \right]
    \le
    \exp(-2ah\beta_{a,h,\delta}^2).
\]
By the definition of \(\beta_{a,h,\delta}\),
\[
    \exp(-2ah\beta_{a,h,\delta}^2)
    =
    \exp\left(
    -\log\binom{n}{a}
    -\log\binom{n}{h}
    -\log(n^2/\delta)
    \right).
\]
Taking a union bound over all \(\binom{n}{a}\binom{n}{h}\) pairs of sets with
these cardinalities, the probability that any such rectangle violates the
bound is at most
\[
    \frac{\delta}{n^2}.
\]
Taking another union bound over at most \(n^2\) admissible cardinality pairs
\((a,h)\), we obtain that, with probability at least \(1-\delta\), every
Hall-relevant rectangle satisfies
\[
    \overline q(I,U)
    \ge
    \mu_q-\beta_{|I|,|U|,\delta}.
\]

By assumption,
\[
    \mu_q
    \ge
    \bar q_{\mathrm{iso}}-\varepsilon_{\mathrm{iso}}
    \ge
    \Theta_{r,c,\delta}.
\]
Since
\[
    \Theta_{r,c,\delta}
    =
    \max_{\substack{a,h\in\{R,\ldots,n\}\\ a+h\ge n+R}}
    \left(
    \tau_{a,h,c}
    +
    \beta_{a,h,\delta}
    \right),
\]
it follows that every Hall-relevant rectangle satisfies
\[
    \overline q(I,U)
    \ge
    \tau_{|I|,|U|,c}.
\]
Equivalently,
\[
    \Lambda(I,U)
    =
    \sum_{i\in I,\ j\in U}q_{ij}
    \ge
    |I||U|\tau_{|I|,|U|,c}.
\]
By the definition of \(\tau_{a,h,c}\),
\[
    \Lambda(I,U)
    \ge
    \log\binom{n}{|I|}
    +
    \log\binom{n}{|U|}
    +
    (c+2)\log n.
\]
Therefore,
\begin{align}
    \Delta_r
    &=
    \sum_{\substack{I,U\subseteq[n]\\ |I|\ge R,\ |U|\ge R,\ |I|+|U|\ge n+R}}
    \exp(-\Lambda(I,U))
    \notag\\
    &\le
    \sum_{\substack{a,h\in\{R,\ldots,n\}\\ a+h\ge n+R}}
    \binom{n}{a}\binom{n}{h}
    \exp\left(
    -\log\binom{n}{a}
    -\log\binom{n}{h}
    -(c+2)\log n
    \right)
    \notag\\
    &\le
    n^2\cdot n^{-(c+2)}
    =
    n^{-c}.
\end{align}
Applying \cref{thm:angular_full_rank} conditionally on the realized probability
matrix \((q_{ij})\) gives
\[
    \mathbb{P}\!\left[
    \operatorname{rank}(S_\Omega+BA)=n
    \,\middle|\, (q_{ij})_{i,j}
    \right]
    \ge
    1-n^{-c}
\]
with probability at least \(1-\delta\) over the draw of \((q_{ij})\).

Finally, by the union bound over the two sources of failure, namely failure of
the probability matrix to satisfy the rectangle-average condition and failure
of the sampled sparse graph to have matching size at least \(n-r\), the joint
success probability is at least
\[
    1-\delta-n^{-c}.
\]
\end{proof}

The role of \(\delta\) in \cref{cor:isotropic_mean_hoeffding_certificate} is
to control the probability that the random collision-probability matrix itself
has a low-density Hall-relevant cut. The term \(n^{-c}\) controls the
subsequent failure probability of the sampled sparse graph, conditioned on
that probability matrix. In the common sparse-collision regime, the practical
design rule suggested by \cref{eq:isotropic_mean_condition} is
\[
    L_{\mathrm{s}}2^{-\gamma}
    \gtrsim
    \Theta_{r,c,\delta}.
\]
Thus, increasing \(L_{\mathrm{s}}\) increases sparse coverage, while increasing
\(\gamma\) makes collisions more selective.

\subsection{Interpretation and technical remarks}

\begin{remark}[Interpretation]
\label{rem:rank_interpretation}
The rank guarantee is an expressivity statement. It shows that the hybrid
sparse + low-rank construction avoids the rank collapse of purely low-rank
attention. The sparse collision graph contributes rank through its maximum
matching, while the low-rank branch supplies \(r\) additional dense directions.
Thus,
\[
    \textnormal{sparse matching size}+\textnormal{low-rank dimension}\ge n
\]
is sufficient for the hybrid attention matrix to be full rank.
\end{remark}

\begin{remark}[Scope of the probabilistic assumptions]
\label{rem:scope_probabilistic_assumptions}
The corollaries above should be read as interpretable regimes in which the
error term in \cref{thm:angular_full_rank} becomes small. Some of these regimes
make simplifying assumptions, such as independent edge sampling or an idealized
isotropic model for the collision probabilities. These assumptions are not
intended to fully model every practical hashing implementation. Rather, they
expose the mechanism behind the algorithm: if angular sparse sampling produces
enough coverage across Hall-relevant cuts, then the sparse component has
matching deficiency at most \(r\), and the rank-\(r\) low-rank component fills
the remaining directions.
\end{remark}

\begin{remark}[Independence]
\label{rem:idealized_independence}
\Cref{thm:angular_full_rank} is stated under an independent angular edge
model. Practical LSH collisions may be dependent because the same hash
functions are reused across multiple query-key pairs. The deterministic
implication in \cref{prop:matching_plus_lowrank}, however, does not require
independence. It applies to any realized support \(\Omega\) satisfying
\(\nu(\Omega)\ge n-r\). Independence is used only to control the probability
that this matching condition fails.
\end{remark}

\begin{remark}[Deterministic version]
\label{rem:deterministic_version_app}
The deterministic implication
\[
    \nu(\Omega)\ge n-r
    \quad\Longrightarrow\quad
    \operatorname{rank}(S_\Omega+BA)=n
\]
holds under the generic sparse-values and generic low-rank assumptions, and
does not require independence of the sparse support. Independence is used only
to upper bound the probability that \(\nu(\Omega)<n-r\).
\end{remark}

\begin{remark}[When exact sparse values are not generic]
\label{rem:non_generic_values_app}
If the sparse exact values \(S_\Omega=\Omega\odot M^\star\) are deterministic
and not assumed to be generic, one can use the deterministic condition
\[
    \operatorname{rank}(S_\Omega)\ge n-r.
\]
Then \cref{lem:generic_lowrank_completion} alone implies
\[
    \operatorname{rank}(S_\Omega+BA)=n
\]
almost surely over the low-rank factors \(A,B\). The matching condition
\(\nu(\Omega)\ge n-r\) is a support-level sufficient condition for this rank
condition under generic sparse values.
\end{remark}

\begin{remark}[Independent safety sparsifier]
\label{rem:safety_sparsifier_app}
If one wants a literal independent-edge guarantee while preserving a practical
LSH collision rule, one may augment the LSH support by an independent safety
sparsifier:
\[
    \Omega
    =
    \Omega_{\mathrm{LSH}}
    \cup
    \Omega_{\mathrm{safe}},
    \qquad
    \Omega_{\mathrm{safe},ij}
    \sim
    \mathrm{Bernoulli}\!\left(p_{r,c}\right).
\]
By \cref{cor:deficiency_aware_density}, the safety support alone is sufficient
to guarantee
\[
    \operatorname{rank}(S_\Omega+BA)=n
\]
with probability at least \(1-n^{-c}\), under the generic sparse-values and
generic low-rank assumptions. Its expected number of additional edges is
\[
    n^2p_{r,c}
    =
    n
    \left(
    8\log\frac{en}{r+1}
    +
    \frac{(c+2)\log n}{r+1}
    \right).
\]
\end{remark}
\section{Experiment Hyperparameters}
\label{app:experiment_hyperparameters}
We evaluate all attention variants using encoder-style, non-causal Transformer architectures. Unless otherwise stated, we train with cross-entropy loss and the AdamW optimizer. We use dropout \(0.1\), set \texttt{qkv\_bias=False}, and fix the random seed to 42. For the hybrid methods, the gate is a two-layer MLP with SiLU activation and two independent sigmoid outputs, one for the sparse branch and one for the RACE branch. We do not normalize the gates.

\begin{table*}[!t]
\centering
\small
\caption{Main dataset-level hyperparameters. Here \(N\) denotes the input sequence length after tokenization or patchification.}
\label{tab:experiment_hyperparams}
\setlength{\tabcolsep}{3.5pt}
\resizebox{\textwidth}{!}{
\begin{tabular}{lcccccccccc}
\toprule
\textbf{Dataset / Task}
& \(\mathbf{N}\)
& \textbf{Layers}
& \textbf{Heads}
& \(\mathbf{d}\)
& \textbf{MLP dim}
& \textbf{Batch}
& \textbf{Grad. accum.}
& \textbf{LR}
& \textbf{WD}
& \textbf{Epochs} \\
\midrule
IMDB
& 512
& 1
& 2
& 128
& \(512\)
& 32
& 1
& \(1\times 10^{-5}\)
& \(5\times 10^{-5}\)
& 150 \\

Fashion-MNIST
& 784
& 2
& 4
& 384
& \(1536\)
& 32
& 1--2
& \(6\times 10^{-4}\)
& 0.1
& 150 \\

Oxford-IIIT Pet
& 16,384
& 2
& 4
& 384
& \(1536\)
& 32
& 1--2
& \(6\times 10^{-4}\)
& 0.1
& 150 \\

Flowers-102
& 16,384
& 2
& 4
& 384
& \(1536\)
& 32
& 1--2
& \(6\times 10^{-4}\)
& 0.1
& 150 \\

Food-101
& 16,384
& 8
& 8
& 512
& \(2048\)
& 8
& 4
& \(3\times 10^{-4}\)
& 0.001
& 100 \\

ArXiv classification
& 32,000
& 4
& 4
& 256
& 1024
& 8
& 16
& \(3\times 10^{-4}\)
& 0.01
& 33 \\

Text Retrieval
& 64,000
& 4
& 4
& 256
& 1024
& 4
& 16
& \(3\times 10^{-4}\)
& 0.01
& 50 \\
\bottomrule
\end{tabular}
}
\end{table*}

For the image experiments, Fashion-MNIST uses \(28\times 28\) grayscale images with patch size 1, giving \(N=784\). Oxford-IIIT Pet and Flowers-102 use \(512\times512\) RGB images with patch size 4, giving \(128\times128=16{,}384\) image tokens. For Food-101, we use the long-image setting with \(512\times512\) inputs, patch size 4, and \(N=16{,}384\). The vision models use a learnable class token and learnable positional embeddings.

For ArXiv classification, we tokenize documents using a basic English tokenizer, keep documents with at least 1,000 raw tokens, and perform class-balanced streaming packing. Documents from the same class are concatenated until the target sequence length is reached. For the reported ArXiv classification setting, we use \(N=32{,}000\), a vocabulary limit of 50,000, and packed train/test examples produced from the class-balanced long-document subset. For Text Retrieval, we construct binary retrieval-pair examples from the packed ArXiv documents. Each input is formatted as
\[
    [\mathrm{CLS}] \; \text{doc}_a \; [\mathrm{SEP}] \; \text{doc}_b ,
\]
with label 1 if the two documents come from the same ArXiv class and label 0 otherwise. We use 4,000 training pairs and 1,000 test pairs at \(N=64{,}000\).

\begin{table}[!t]
\centering
\small
\caption{Attention-specific hyperparameters.}
\label{tab:attention_hyperparams}
\setlength{\tabcolsep}{4pt}
\begin{tabular}{lcc}
\toprule
\textbf{Component} & \textbf{Text settings} & \textbf{Vision settings} \\
\midrule
RACE hash bits \(\gamma\)
& 4 for ArXiv/Retrieval, 3 for IMDB
& 2 \\

RACE tables \(L_s\)
& 4 for ArXiv/Retrieval, 2 for IMDB
& 5 \\

RACE ensembles \(M\)
& 1 for ArXiv/Retrieval, 2 for IMDB
& 1 \\

Sort\_LSH bits
& 5
& 4--5 \\

Sort\_LSH block size \(s\)
& 256 for ArXiv/Retrieval, 32 for IMDB
& 32 \\

Sort\_LSH min length
& 4096 for ArXiv/Retrieval, 256 for IMDB
& 256 \\

Neighbor blocks
& 0
& 0 \\

Gate hidden dim
& 64
& 128 \\

Gate normalization
& False
& False \\

\(\epsilon\) for denominator correction
& \(10^{-6}\)
& \(10^{-6}\) \\
\bottomrule
\end{tabular}
\end{table}

For ELSAA, the sparse branch is Sort\_LSH and the low-rank branch is RACE. The output is computed as
\[
    O
    =
    g_{\mathrm{sparse}}\,m_{\mathrm{sparse}}\,O_{\mathrm{sparse}}
    +
    g_{\mathrm{race}}\,O_{\mathrm{race}},
\]
where \(g_{\mathrm{sparse}}\) and \(g_{\mathrm{race}}\) are token-wise sigmoid gates. In the denominator-aware version, we use
\[
    m_{\mathrm{sparse}}
    =
    \frac{d_{\mathrm{sparse}}}
    {d_{\mathrm{sparse}}+\lambda d_{\mathrm{race}}+\epsilon}.
\]
For the scalar-\(\lambda\) version, \(\lambda\) is parameterized as
\[
    \lambda = \exp(\ell_\lambda),
\]
initialized with \(\lambda=1.0\), and learned during training. For the input-dependent version, we use
\[
    \lambda_i = c + \sigma(w^\top q_i + b),
\]
where \(c\) is initialized to 0.3, is learnable, and is constrained to be nonnegative. The bias is initialized so that the initial average target is approximately 0.8. We detach \(q_i\) from the lambda path, use \(\lambda_{\min}=10^{-6}\), and initialize \(w\) with standard deviation \(10^{-3}\).

For IMDB, we build a long-review subset at \(N=512\). Reviews with length between \(N\) and \(2N\) are kept directly, longer reviews are split into overlapping windows with stride \(N/2\), and shorter reviews from the same class are concatenated until they reach the target length. During training, we apply light EDA augmentation using random deletion with probability 0.05 or random token swaps.

For the vision datasets, we use standard data augmentation. Fashion-MNIST uses random horizontal flipping and random cropping with padding 4 during training. Oxford-IIIT Pet, Flowers-102, and Food-101 use random resized cropping and horizontal flipping for training, and resize followed by center crop for validation. RGB image datasets are normalized with ImageNet mean and standard deviation.
\section{Numerical Complexity Example}
\label{app:numerical_complexity_example}

\paragraph{Attention interaction count.}
We give a concrete example to illustrate the scale of the savings from the sparse--low-rank construction. For
\[
    N=32{,}000,\qquad s=256,\qquad L_{\mathrm{s}}=4,\qquad \gamma=4,
\]
full attention computes
\[
    N^2 = 1.024\times 10^9
\]
query--key interactions per head. The sparse sortLSH branch computes
\[
    Ns = 32{,}000\cdot 256 = 8.192\times 10^6
\]
selected interactions, which is \(0.8\%\) of full attention. The RACE branch uses
\[
    NL_{\mathrm{s}}2^\gamma
    =
    32{,}000\cdot 4\cdot 16
    =
    2.048\times 10^6
\]
bucket interactions, which is \(0.2\%\) of full attention. Therefore, ELSAA uses
\[
    N(s+L_{\mathrm{s}}2^\gamma)
    =
    32{,}000(256+64)
    =
    1.024\times 10^7
\]
interactions, which is \(1.0\%\) of full attention. This corresponds to roughly a \(99.0\%\) reduction in attention interactions compared with exact dense attention.

\paragraph{Hashing and sorting overhead.}
The hashing overhead is linear or near-linear in sequence length. The sortLSH branch requires angular hash projections and sorting, approximately
\[
    O(Nd\,\gamma_{\mathrm{sort}})+O(N\log N),
\]
where \(\gamma_{\mathrm{sort}}\) denotes the number of hash bits used by the sparse selector. The RACE branch requires soft-hash projections and bucket-feature construction, approximately
\[
    O(Nd\,L_{\mathrm{s}}\gamma)+O(NL_{\mathrm{s}}\gamma 2^\gamma).
\]
In our experiments, \(s\), \(L_{\mathrm{s}}\), \(\gamma\), and \(\gamma_{\mathrm{sort}}\) are fixed hyperparameters. Therefore, these terms scale linearly or near-linearly in \(N\), whereas dense attention scales quadratically. For sufficiently long sequences, the hashing and sorting overhead is dominated by the saved \(N^2\) query--key computation.

\paragraph{Parameter overhead.}
The additional learnable parameters introduced by ELSAA are small compared with the base Transformer layer. The standard attention projections
\[
    W_Q,W_K,W_V,W_O\in\mathbb{R}^{d\times d}
\]
already contribute \(O(d^2)\) parameters. In contrast, the scalar-\(\lambda\) version of ELSAA adds only \(O(1)\) parameters, a query-dependent \(\lambda\) version adds \(O(d)\) parameters, and the gate MLP adds \(O(dg)\) parameters for gate hidden width \(g\ll d\). Thus, the parameter overhead is negligible relative to the base attention projections, while the main savings come from reducing the input-dependent attention computation.

\end{document}